\documentclass{article}

\usepackage[preprint]{neurips_2026} 
\usepackage{amsmath,amssymb,amsfonts,amsthm}
\usepackage{booktabs}
\usepackage{multirow}
\usepackage{graphicx}
\usepackage{microtype}
\usepackage{xcolor}
\usepackage{tikz}
\usepackage{enumitem}
\usepackage{array}
\usepackage{float}
\usepackage{wrapfig}
\usepackage{caption}
\usepackage{subcaption}

\usetikzlibrary{arrows.meta,positioning,fit,calc,shapes.geometric}

\newtheorem{proposition}{Proposition}

\title{Adaptive Anisotropic Attention for Axis-Structured Signals}

\author{%
  Mahir Jain\\
  Mannas AI\\
  \texttt{mahir@mannas.ai}
  \And
  Parshva Runwal\\
  Mannas AI\\
  \texttt{parshva@mannas.ai}
  \And
  Aditya Ray Mishra\\
  Mannas AI\\
  \texttt{aditya@mannas.ai}
  \And
  Arvasu Kulkarni\\
  Mannas AI\\
  \texttt{arvasu@mannas.ai}
  \And
  Jeet B Lahiri\\
  IIT Mandi\\
  \texttt{d23146@students.iitmandi.ac.in}
  \And
  Siddharth Panwar\\
  Mannas AI\\
  \texttt{siddharth@mannas.ai}
  \And
  Sandeep Singh\\
  Mannas AI\\
  \texttt{sandeep@mannas.ai}
}

\begin{document}
\maketitle

\begin{abstract}
Dense self-attention treats all token pairs as equally plausible before learning—an interaction-isotropic prior that can be mismatched to structured signals. For structured, low signal-to-noise ratio (SNR) signals such as EEG, dependencies are organized along the electrode and time axes, and this uniform prior exposes each token to many irrelevant interactions. We introduce Adaptive Anisotropic Attention (AAA), which splits attention into two paths: a temporal path, where each token attends to the tokens of its own electrode across time, and a spatial path, where it attends to the tokens of the other electrodes at the same time step. A small gate predicts, for every token, a convex combination and the token's update is the weighted sum of the two path outputs. On six EEG downstream tasks, the resulting model, AXON (AXis-factorized Operator Network), improves mean balanced accuracy over a dense baseline under both linear probing and full fine-tuning. We show that both paths (temporal and spatial) are necessary and that the weighted sum beats a hard choice of one path; most of the benefit comes from the gate learning a different temporal/spatial balance at each layer of the network. Controlled audio spectrogram experiments show that axis factorization transfers beyond EEG. These results suggest that aligning attention with the natural axes of structured signals provides a useful inductive bias.
\end{abstract}

\section{Introduction}

Transformers~\citep{vaswani2017attention} process tokens: small input
segments represented as vectors. In EEG and audio, tokens form
two-dimensional grids. An EEG token contains one second of signal
from one electrode; an audio token covers one frequency band over
one spectrogram frame. Dense self-attention connects all tokens
directly, allowing long-range interactions without distinguishing
the electrode--time or frequency--time axes.

For structured spatiotemporal signals such as EEG, which lie on electrodes $\times$ time, this interaction-isotropic prior may be mismatched. In masked autoencoding (MAE)~\citep{he2022mae}, the model learns to reconstruct masked patches, not to distinguish downstream classes. Our hypothesis is that dense global attention encourages reconstruction shortcuts: for example, estimating a masked patch from a broad average across electrodes and time. This dilutes localized or axis-specific signals that are critical for downstream tasks such as motor imagery classification.

We introduce Adaptive Anisotropic Attention (AAA), which replaces dense encoder attention with two parallel paths. The temporal path attends within each electrode across time; the spatial path attends across electrodes at the same time step. A small gate combines their outputs through a convex combination. This soft mixture can vary across tokens and layers. We call the resulting model AXON (AXis-factorized Operator Network) and evaluate it on six EEG tasks under linear probing (LP) and full fine-tuning (FT), against dense baselines including a parameter-matched model.

We analyse learned axis mixtures and task-dependent temporal context,
and test the design on audio spectrograms as a second axis-structured
modality. Axis factorization helps
both modalities.

\paragraph{Related work.}
\emph{Factorized attention.} 
Axial attention~\citep{ho2019axial} and divided space-time attention~\citep{bertasius2021space,arnab2021vivit} were built for images and video and attend along one axis at a time. The axis order is set by hand and is the same for every token and every layer, and these designs were not built for, or tested on, low-SNR signals such as EEG. AXON keeps the two axes but runs them in parallel and lets a gate set the balance per token and per layer. 
\emph{EEG foundation models.} EEGPT, LaBraM, CBraMod, BIOT, CSBrain and REVE~\citep{wang2024eegpt,jiang2024labram,wang2025cbramod,yang2023biot,zhou2025csbrain,elouahidi2025reve} pretrain large encoders on unlabelled EEG so that one model transfers to many tasks. They differ in tokenisation and pretraining objective; CBraMod, attends along time and along channels in parallel and combines the two with a fixed split of attention heads. None of them tests whether the dense all-to-all path should be removed, or lets the time/channel balance be learned per token and per layer. We evaluate all six under one protocol (Tables~\ref{tab:external_lp} and~\ref{tab:external_ft}).

\section{Method: Adaptive Anisotropic Attention}
\label{sec:method}

EEG tasks require different temporal and spatial contexts
(Appendix~A). The encoder is a stack of 22 identical blocks, which we call layers; each layer has its own attention paths and its own gate weights, so ``per layer'' below means a separate value in each of the 22 blocks.

\subsection{Input domain}

We process non-overlapping 10-second windows during pretraining and
task-specific windows of 4--30 seconds downstream (Table~\ref{tab:datasets}). Each window
contains $C$ available electrodes and $T$ patches per electrode. A token
$i=(c_i,t_i)$ is a channel--time patch on the grid
$\Omega=\mathcal{C}\times\mathcal{T}$, where $\mathcal{C}$ and
$\mathcal{T}$ index electrodes and temporal patches, respectively.
The full grid contains $CT$ tokens ($231$ for the pretraining grid of $21$ electrodes and $11$ patches). Each electrode has a known
3D head coordinate $p_c\in\mathbb{R}^3$ from the standard 10--20
montage~\citep{jasper195810twenty}, and $x_i\in\mathbb{R}^d$ denotes the token
embedding after patch projection and positional encoding.

\textbf{Dense} attention uses shared $Q,K,V$ projections across
all tokens:
\[
\operatorname{Attn}(x)_i
=
\sum_{j\in\Omega} a_{ij}Vx_j,
\qquad
a_{ij}\propto
\exp\!\left(\frac{q_i^\top k_j}{\sqrt{d_h}}\right),
\]
giving $(CT)^2$ token pairs per layer. \textbf{Dense-L} widens this
baseline to match AXON's parameter count(by increasing dimensions) (Appendix~\ref{app:params}); And \textbf{Divided-ST} applies temporal then spatial attention within each block~\citep{bertasius2021space} (Appendix~\ref{app:divided_st}).

\subsection{Factorized attention}
\label{ssec:factorized}

The core idea is to replace the dense attention operator with a weighted mixture of two axis-restricted operators, each attending within one axis of the token grid $\Omega = \mathcal{C}\times\mathcal{T}$: the temporal path over time within a channel, the spatial path over channels within a time step.

Let $T(x)_i$ denote the temporal attention output and $S(x)_i$ the spatial attention output for token $i$ (defined below). A single AXON block computes:
\[
y_i \;=\; \lambda\, T(x)_i \;+\; (1-\lambda)\, S(x)_i, \qquad \lambda \in (0,1),
\]
where $\lambda$ controls the axis mixture. In the simplest variant (\textbf{AXON-Fixed}), $\lambda = \sigma(\ell)$ is the sigmoid of a single learnable scalar $\ell$ per layer, shared across all tokens. 

The temporal and spatial paths use separate QKV and output projections $(Q^{(T)}, K^{(T)}, V^{(T)}, O^{(T)})$ and $(Q^{(S)}, K^{(S)}, V^{(S)}, O^{(S)})$, allowing each axis to specialise its feature space. The full cost analysis is in Appendix~\ref{app:params}. Although no token sees all others in one layer, on a full channel-time grid any two tokens are connected after two layers: one temporal step and one spatial step (Appendix~\ref{app:axis_diameter}).

\begin{figure}[!htb]
\centering
\begin{subfigure}[c]{0.40\linewidth}
\centering
\resizebox{\linewidth}{!}{
\begin{tikzpicture}[scale=0.5, font=\scriptsize]
\fill[blue!18] (0,3) rectangle (8,4);
\fill[orange!25] (4,0) rectangle (5,6);
\fill[red!75] (4,3) rectangle (5,4);
\draw[step=1, gray!60, thin] (0,0) grid (8,6);
\node[anchor=north] at (4,-0.15) {time patches $t$};
\node[anchor=south, rotate=90] at (-0.35,3) {electrodes $c$};
\node[anchor=west, blue!60!black, align=left] at (8.15,3.5) {$\mathcal{T}(i)$: same channel,\\all time steps};
\node[anchor=south, orange!85!black, align=center] at (4.5,6.15) {$\mathcal{S}(i)$: same time step,\\all channels};
\node[anchor=north west, red!80!black] at (5.05,2.95) {token $i=(c_i,t_i)$};
\end{tikzpicture}
}
\caption{Token grid and the two neighbourhoods.}
\label{fig:grid}
\end{subfigure}\hfill
\begin{subfigure}[c]{0.57\linewidth}
\centering
\resizebox{\linewidth}{!}{
\begin{tikzpicture}[
    font=\small,
    box/.style={draw, rounded corners, align=center, minimum height=0.9cm, minimum width=2.4cm},
    op/.style={draw, rounded corners, align=center, minimum height=0.85cm, minimum width=2.2cm, fill=gray!8},
    gate/.style={draw, rounded corners, align=center, minimum height=0.85cm, minimum width=2.0cm, fill=blue!8},
    arrow/.style={-Latex, thick}
]
\node[box] (input) {Visible EEG tokens\\$(c,t)$ embeddings};

\node[op, below left=1.35cm and 1.9cm of input] (temporal) {Temporal path\\$\mathcal{T}(i)$: same channel\\geometry-free, $T$ tokens};
\node[op, below right=1.35cm and 1.9cm of input] (spatial) {Spatial path\\$\mathcal{S}(i)$: same timestep\\content-based attn, $C$ tokens};
\node[gate, below=1.35cm of input] (gate) {Token gate $g$\\$\mathrm{softmax}(g(\mathrm{sg}(x_i))/\tau)$\\$\Rightarrow (\alpha_i,\beta_i)$};

\node[box, below=2.15cm of gate] (mix) {Axis mixture\\$y_i=\alpha_iT(x)_i+\beta_iS(x)_i$};
\node[box, below=1.1cm of mix] (out) {Residual + LayerNorm + FFN};

\node[draw,dashed,rounded corners,fit=(temporal)(spatial)(gate)(mix),inner sep=0.35cm,label={[align=center]above:AXON block ($\times 22$)}] {};

\draw[arrow] (input) -- (temporal);
\draw[arrow] (input) -- (spatial);
\draw[arrow] (input) -- (gate);
\draw[arrow] (temporal) -- (mix);
\draw[arrow] (spatial) -- (mix);
\draw[arrow] (gate) -- node[right] {$\alpha_i,\beta_i$} (mix);
\draw[arrow] (mix) -- (out);

\node[op, right=3.0cm of mix, fill=red!8] (noglobal) {No dense global path\\$G(x)$ removed\\(MAE shortcut)};
\draw[arrow, dashed] (noglobal) -- (mix);
\end{tikzpicture}
}
\caption{AXON block.}
\label{fig:block}
\end{subfigure}
\caption{
(a) Channel--time grid with query token $i$ (red), temporal neighbours
$\mathcal{T}(i)$ (blue), and spatial neighbours $\mathcal{S}(i)$
(orange); dense attention uses the full grid.
(b) An AXON block mixes both paths using weights from an MLP applied
to $\operatorname{sg}(x_i)$. The encoder stacks 22 blocks without a
dense global branch.
}
\label{fig:arch}
\end{figure}

\paragraph{Temporal path.}
The temporal neighbourhood of token $i$ is all tokens on the same channel:
\[
\mathcal{T}(i) = \{j \in \Omega : c_j = c_i\}, \quad |\mathcal{T}(i)| = T.
\]
Each channel group of $T$ tokens is processed as an independent sequence; channels do not interact through the temporal path. This gives each token access to the full temporal context of its electrode: short transients, rhythm-band oscillations, and longer event windows are all contained in $\mathcal{T}(i)$.

\paragraph{Spatial path.}
The spatial neighbourhood of token $i$ is all tokens at the same time step:
\[
\mathcal{S}(i) = \{j \in \Omega : t_j = t_i\}, \quad |\mathcal{S}(i)| = C.
\]
$T(x)_i$ and $S(x)_i$ are standard multi-head attention restricted to these neighbourhoods.

The spatial path is not given any explicit information about electrode coordinates; electrode positions enter only through the positional encoding. (Appendix~\ref{app:ablation_full}).

\subsection{Token-conditioned anisotropy gate}
\label{ssec:gate}
A fixed mixing weight shared by all tokens imposes one mixture on all of them. The token-conditioned gate lets each token choose its own.

\textbf{AXON-TokenGated} replaces the shared layer weight with
token-conditioned mixing:
\[
\begin{aligned}
(\alpha_i,\beta_i)
&=\operatorname{softmax}\!\left(g(\operatorname{sg}(x_i))/\tau\right).
\end{aligned}
\]
The two-layer MLP $g:\mathbb{R}^d\rightarrow\mathbb{R}^2$ has hidden
width $d/4$, GELU activation, and a zero-initialised output layer; $\operatorname{sg}$ refers to stop gradient. We anneal $\tau$ from 2.0 to 1.0 over the first 1500 steps;
higher values keep the weights closer to equal (Appendix~C).

The block output is the soft-gated mixture of the temporal and spatial paths:
\[
y_i = \alpha_i\, T(x)_i + \beta_i\, S(x)_i, \qquad \alpha_i + \beta_i = 1,
\]
where $T$ and $S$ are the temporal and spatial paths defined in Section~\ref{ssec:factorized}. Each layer computes a gate for every token on every forward pass.
Because $x_i$ includes positional encoding, weights can vary with
content, channel--time position, layer, and input sample.
AXON-TokenGated uses one full-length temporal window and no global
path; final AXON replaces $T$ with the two-window path (Section~2.5).
Gate interventions are in Section~3.2 and Appendix~H.

\subsection{Global path}
\label{ssec:global}

A natural extension adds a third path: dense attention over all tokens, with its own projections. The gate then predicts three weights that sum to one:
\[
y_i = \alpha_i\, T(x)_i + \beta_i\, S(x)_i + \gamma_i\, G(x)_i, \qquad \alpha_i + \beta_i + \gamma_i = 1,
\]
where $G(x)_i = \sum_{j \in \Omega} g_{ij}\, V_G x_j$. This is \textbf{AXON-withGlobal}. The motivation was that clinical tasks may benefit from direct all-to-all context in a single layer, which the two axis paths only reach after two layers. AXON does not use this path; Section~\ref{sec:results} reports its effect.

\subsection{Two temporal windows}
\label{ssec:multiscale}

EEG carries information at different time scales: a motor-imagery response or a seizure onset develops over a few seconds, while a sleep stage or a cognitive state lasts the whole window~\citep{pfurtscheller1999erders}.
 A temporal path that always sees the whole window can in principle use both, but it has to learn to separate a short event from the slow background on its own. We make the separation explicit by giving the temporal path two windows: a short one that sees only five consecutive patches, and a long one that sees the whole visible window. Each layer learns how much to use each:
\[
T_{\mathrm{ms}}(x)_i = \lambda^s\, T_{\mathrm{short}}(x)_i + (1-\lambda^s)\, T_{\mathrm{long}}(x)_i, \qquad \lambda^s \in [0,1],
\]
with one $\lambda^s$ per layer, initialised to favour the long window so that early reconstruction is easy. This choice is separate from the axis gate: the axis gate sets the temporal/spatial split for each token, and $\lambda^s$ sets how far in time the temporal path looks. \textbf{AXON}, the final model, is AXON-TokenGated with this two-window temporal path and no global path.

\section{Experiments}

\subsection{Setup}

All encoders are pretrained with masked autoencoding~\citep{he2022mae} on pooled TUH-EEG~\citep{obeid2016tuh}, I-CARE~\citep{amorim2023icareeeg}, and internal EEG data for 50 epochs with batch size 4096. Recordings are mapped to 21 canonical 10--20 electrode positions and divided into 10-second windows with $T=11$ patches per channel. We mask 55\% of tokens; the encoder processes only visible tokens, and a small decoder reconstructs masked patches. Missing channels are excluded from tokenization and reconstruction loss; attention masks cover only present tokens.

We evaluate on six public datasets covering motor imagery (motor, bcic), cognitive workload (workload), sleep staging (hmc), seizure detection (siena), and dementia diagnosis (adftd). We report balanced accuracy (BAC) on subject-disjoint splits using \textbf{linear probing} (LP; shallow MLP head) and \textbf{full fine-tuning} (FT). For each model, Table~\ref{tab:main_results} reports the checkpoint selected by the highest mean LP on held-out validation subjects, rather than the final epoch. Dataset summaries appear in Table~\ref{tab:datasets}; implementation, training-budget analysis, and evaluation details are in Appendices~\ref{app:implementation} and~\ref{app:downstream}.

\subsection{Results}\label{sec:results}
\label{ssec:results}

\begin{table}[!t]
\centering
\caption{Main EEG results --- \textbf{Linear Probe} (LP, frozen encoder) and \textbf{Full Finetune} (FT). Balanced accuracy, mean $\pm$ std over 3 downstream seeds; subject-disjoint splits shared across all models.}
\label{tab:main_results}
\resizebox{\linewidth}{!}{
\begin{tabular}{lccccccc}
\toprule
Model & motor & workload & hmc & siena & adftd & bcic & Mean LP \\
\midrule
Dense       & $0.353{\pm}.002$ & $0.612{\pm}.021$ & $0.660{\pm}.003$ & $\mathbf{0.876{\pm}.007}$ & $0.516{\pm}.034$ & $0.282{\pm}.008$ & 0.550 \\
Dense-L & $0.384{\pm}.003$ & $0.648{\pm}.011$ & $0.653{\pm}.002$ & $0.828{\pm}.000$ & $0.531{\pm}.004$ & $0.288{\pm}.001$ & 0.555 \\
Divided-ST & $0.393{\pm}.001$ & $0.621{\pm}.008$ & $0.649{\pm}.002$ & $0.872{\pm}.003$ & $0.523{\pm}.028$ & $\mathbf{0.299{\pm}.002}$ & 0.560 \\
AXON-Fixed           & $0.370{\pm}.003$ & $\mathbf{0.674{\pm}.007}$ & $0.666{\pm}.000$ & $0.841{\pm}.001$ & $0.502{\pm}.008$ & $0.296{\pm}.003$ & 0.558 \\
AXON-TokenGated & $0.447{\pm}.001$ & $0.662{\pm}.012$ & $\mathbf{0.673{\pm}.001}$ & $0.855{\pm}.000$ & $0.531{\pm}.014$ & $0.298{\pm}.004$ & 0.578 \\
\textbf{AXON}         & $\mathbf{0.455{\pm}.005}$ & $\mathbf{0.673{\pm}.009}$ & $0.650{\pm}.002$ & $0.867{\pm}.001$ & $\mathbf{0.538{\pm}.006}$ & $0.292{\pm}.007$ & \textbf{0.579} \\
\bottomrule
\end{tabular}
}
\vspace{4pt}
\resizebox{\linewidth}{!}{
\begin{tabular}{lccccccc}
\toprule
\multicolumn{8}{l}{\textit{Full Finetune}} \\
\midrule
Model & motor & workload & hmc & siena & adftd & bcic & Mean FT \\
\midrule
Dense       & $0.561{\pm}.015$ & $0.648{\pm}.008$ & $\mathbf{0.736{\pm}.003}$ & $0.853{\pm}.001$ & $0.504{\pm}.046$ & $0.335{\pm}.038$ & 0.606 \\
Dense-L & $0.623{\pm}.001$ & $0.613{\pm}.012$ & $0.728{\pm}.005$ & $0.826{\pm}.009$ & $0.536{\pm}.011$ & $\mathbf{0.469{\pm}.051}$ & 0.633 \\
Divided-ST & $0.600{\pm}.006$ & $0.661{\pm}.057$ & $0.720{\pm}.003$ & $0.853{\pm}.011$ & $0.555{\pm}.021$ & $0.379{\pm}.017$ & 0.628 \\
AXON-Fixed           & $0.619{\pm}.009$ & $\mathbf{0.735{\pm}.008}$ & $0.731{\pm}.006$ & $0.853{\pm}.013$ & $0.501{\pm}.029$ & $0.403{\pm}.020$ & 0.640 \\
AXON-TokenGated & $\mathbf{0.630{\pm}.012}$ & $0.670{\pm}.033$ & $0.729{\pm}.003$ & $\mathbf{0.867{\pm}.013}$ & $0.563{\pm}.005$ & $0.399{\pm}.046$ & 0.643 \\
\textbf{AXON}         & $0.625{\pm}.005$ & $0.685{\pm}.006$ & $0.732{\pm}.009$ & $0.859{\pm}.007$ & $\mathbf{0.604{\pm}.022}$ & $0.428{\pm}.033$ & \textbf{0.656} \\
\bottomrule
\end{tabular}
}
\end{table}

AXON achieves the highest Mean LP (0.579) and Mean FT (0.656)
in Table~\ref{tab:main_results}. It improves LP and FT over Dense
by 2.9 and 5.0 balanced-accuracy points, respectively. The gains
remain over Dense-L (2.4 and 2.3 points) and Divided-ST
(1.9 and 2.8 points). The largest task-level gains over Dense
are on motor LP (+10.2 points) and adftd FT (+10.0 points).

Three further controls show that the gain is not simply from attending to fewer tokens, and not simply from position information: restricting the spatial path to each electrode's nearest neighbours drops Mean LP to 0.510; the trained dense model still attends to pairs that share neither electrode nor time step (Table~\ref{tab:relation_mass}); and adding 2D rotary position embeddings~\citep{su2024roformer} to the dense model gains only $+$0.008 (Appendix~\ref{app:ablation_full}). The ranking is stable at every pretraining budget we tested (Appendix~\ref{app:budget}). A variant that biases the temporal path toward nearby time steps (AXON-Decay) matches AXON on Mean LP but lowers Mean FT (Appendix~\ref{app:temporal_locality_full}). Appendix~\ref{app:gate_mechanism} shows what the gate learns and tests whether the model depends on it. AXON-withGlobal, the variant with a third dense path (Section~\ref{ssec:global}), underperforms both AXON and AXON-Fixed on Mean LP and FT (Table~\ref{tab:ablation}) despite reaching comparable reconstruction loss (Figure~\ref{fig:recon_loss}). Its first-layer global weight is $\gamma=0.724$, leaving about a quarter for the axis paths. We hypothesise that global averaging provides a reconstruction shortcut that weakens the learning of axis-specific features (Appendix~\ref{app:global_shortcut}). 

\paragraph{Comparison with released EEG foundation models.}
Under the same six-task protocol, AXON has the highest mean scores among six released EEG foundation models: 0.579 vs.\ 0.527 Mean LP and 0.656 vs.\ 0.614 Mean FT against the next-best, REVE (Appendix~\ref{app:external_baseline_details}). 

\section{Audio Spectrogram Experiments}
We apply the same axis-factorization principle to time–frequency spectrograms, comparing four attention variants under the AudioMAE pretraining recipe~\citep{huang2022audiomae}.  A spectrogram is a grid too: one axis is time, the other is frequency, and a token is one frequency band over one time frame. The temporal path attends across time within a frequency band; the spatial path becomes a frequency path and attends across frequency bands within a time frame. The full-scale experiment uses AudioSet-2M corpus~\citep{gemmeke2017audioset} ($\sim$2M clips); smaller-scale results and the comparison with published AudioMAE are in Appendix~\ref{app:audio}. Downstream evaluation uses AudioSet, ESC-50~\citep{piczak2015esc} and SpeechCommands (SC)~\citep{warden2018speechcommands}; AudioSet is multi-label, so we report mean average precision (mAP), and the other two report accuracy.

\begin{table}[!t]
\centering
\caption{Controlled audio spectrogram results --- full AudioSet-2M pretraining. All variants use identical optimizer, schedule, and compute budget; only the attention operator differs.}
\label{tab:audio}
\resizebox{\linewidth}{!}{
\begin{tabular}{lcccc}
\toprule
Model & AudioSet FT mAP & AudioSet LP mAP & ESC-50 LP Acc & SC LP Acc \\
\midrule
Dense & $11.59{\pm}0.25$ & $4.08{\pm}0.01$ & $45.25{\pm}0.94$ & $19.53{\pm}0.07$ \\
Fixed gate (AXON-Fixed) & $13.75{\pm}0.16$ & $4.01{\pm}0.02$ & $44.83{\pm}0.31$ & $\mathbf{22.01{\pm}0.06}$ \\
Token gate (AXON-TokenGated) & $13.82{\pm}0.25$ & $4.84{\pm}0.03$ & $45.75{\pm}0.71$ & $21.96{\pm}0.15$ \\
Token + Global (AXON-withGlobal) & $\mathbf{14.29{\pm}0.05}$ & $\mathbf{5.08{\pm}0.04}$ & $\mathbf{47.17{\pm}0.92}$ & $21.87{\pm}0.16$ \\
\bottomrule
\end{tabular}
}
\end{table}

All factorized variants improve over Dense on AudioSet FT and SpeechCommands LP (Table~\ref{tab:audio}). Unlike EEG, Token+Global achieves the highest scores on three of the four metrics. On AudioSet FT and SpeechCommands LP this advantage holds at all three pretraining scales we tested, 18K, 200K and 2M clips (Tables~\ref{tab:audio_18k} and~\ref{tab:audio_200k}). The audio experiment therefore extends the factorization result while showing that the global branch is not uniformly detrimental across the tested settings.

\section{Conclusion}

AXON combines temporal and spatial attention through learned soft mixing. Across six subject-disjoint EEG tasks, it improves mean balanced accuracy over dense and parameter-matched dense baselines under Linear Probing and full fine-tuning. Gate interventions show larger performance drops from removing either axis or using hard routing than from replacing token gates with layer means. Axis factorization extends to audio, a second axis-structured modality. These results support a simple design rule for structured signals: align attention with the signal's axes and let the model learn how much to use each.

\newpage
\bibliographystyle{plainnat}
\bibliography{references}

\newpage
\section*{Appendix}
\appendix 

\section{Background \& Domain Context}
\label{app:background}

Electroencephalography (EEG) measures electrical potential differences across a sparse array of scalp electrodes at millisecond resolution. Each recording is a matrix of $C$ channels by $T$ time samples. The signal is low-amplitude ($\sim$10--100\,$\mu$V), contaminated by muscle artifacts, eye movements, and ambient noise, and varies across subjects and sessions. Despite this noise, EEG encodes clinically and cognitively meaningful structure: motor imagery induces localized mu-rhythm desynchronization over sensorimotor cortex; sleep staging depends on broadband slow-wave and spindle patterns; epileptic events manifest as sharp, spatially propagating discharges.

Two properties make EEG a natural proving ground for anisotropic attention. First, the token is explicitly two-dimensional: a patch token at $(c,t)$ has a known spatial identity (electrode $c$ with head coordinate) and a temporal identity (time window $t$). Second, different tasks depend on structurally different contexts: motor imagery requires preserving localized, lateralized, temporally precise structure; clinical classification benefits from broader spatial and temporal integration. A single isotropic attention operator may be a poor shared prior across tasks with such different channel-time structure.

\section{External EEG Foundation Models}
\label{app:external_baseline_details}

\subsection{External EEG baselines --- Linear Probe}
\label{app:external_lp}
We compare AXON against six published EEG foundation models under a standardised evaluation protocol: the same six tasks, the same disjoint subject splits, the same balanced-accuracy metric, and the same three downstream seeds as our internal ablations. All external models are evaluated from their officially released pretrained checkpoints. Models are evaluated with both linear probing (LP, frozen encoder) and fine-tuning (FT, all weights updated).

\begin{table}[h]
\centering
\caption{External EEG foundation model comparison --- \textbf{Linear Probe} (LP, frozen encoder). Format: balanced accuracy. \textbf{Bold} = column best. Our dense baseline is included as a within-setup reference. All models evaluated under the same 6-task protocol with identical subject splits.}
\label{tab:external_lp}
\resizebox{\linewidth}{!}{
\begin{tabular}{lccccccc}
\toprule
Model & motor & workload & hmc & siena & adftd & bcic & Mean \\
\midrule
EEGPT~\citep{wang2024eegpt} & $0.381{\pm}.015$ & $0.574{\pm}.020$ & $\mathbf{0.665{\pm}.006}$ & $0.804{\pm}.019$ & $0.393{\pm}.034$ & $0.281{\pm}.007$ & 0.516 \\
LaBraM~\citep{jiang2024labram} & $0.268{\pm}.012$ & $0.500{\pm}.000$ & $0.381{\pm}.019$ & $0.500{\pm}.000$ & $0.309{\pm}.019$ & $0.285{\pm}.022$ & 0.374 \\
CBraMod~\citep{wang2025cbramod} & $0.259{\pm}.013$ & $0.500{\pm}.000$ & $0.510{\pm}.001$ & $0.619{\pm}.006$ & $0.358{\pm}.002$ & $0.270{\pm}.010$ & 0.419 \\
BIOT~\citep{yang2023biot} & $0.284{\pm}.008$ & $0.577{\pm}.081$ & $0.644{\pm}.003$ & $0.610{\pm}.025$ & $0.492{\pm}.033$ & $0.261{\pm}.010$ & 0.478 \\
CSBrain~\citep{zhou2025csbrain} & $0.272{\pm}.005$ & $0.500{\pm}.000$ & $0.568{\pm}.002$ & $0.500{\pm}.000$ & $0.364{\pm}.014$ & $0.270{\pm}.009$ & 0.412 \\
REVE~\citep{elouahidi2025reve} & $0.315{\pm}.003$ & $\mathbf{0.709{\pm}.035}$ & $0.653{\pm}.006$ & $0.688{\pm}.033$ & $0.532{\pm}.052$ & $0.267{\pm}.012$ & 0.527 \\
\midrule
Dense & $0.353{\pm}.002$ & $0.612{\pm}.021$ & $0.660{\pm}.003$ & $\mathbf{0.876{\pm}.007}$ & $0.516{\pm}.034$ & $0.282{\pm}.008$ & 0.550 \\
\textbf{AXON} & $\mathbf{0.455{\pm}.005}$ & $0.673{\pm}.009$ & $0.650{\pm}.002$ & $0.867{\pm}.001$ & $\mathbf{0.538{\pm}.006}$ & $\mathbf{0.292{\pm}.007}$ & \textbf{0.579} \\
\bottomrule
\end{tabular}
}
\end{table}

AXON achieves the highest Mean LP and Mean FT among all eight models, ahead of the next-best external model (REVE) by $+$5.2 points LP ($+$9.9\% relative) and $+$4.2 points FT ($+$6.8\% relative). The full fine-tuning comparison is in Appendix Table~\ref{tab:external_ft}. Note that our own dense baseline already outperforms all six external models on mean LP. So part of AXON's gap to the external models comes from our training setup rather than from the attention design, and AXON's gain over that dense baseline (Table~\ref{tab:main_results}) is measured on top of it. The architectural claim rests on that controlled comparison, not on this table.

Task-level analysis reveals that motor imagery exhibits the largest gap between AXON and external models ($+$19\% LP over the best competitor). This fits the motivation for preserving axis structure: motor imagery depends on a left/right difference in mu and beta rhythms over the sensorimotor electrodes. This comparison does not, however, show which features account for the gain. REVE retains advantages on workload and siena FT, suggesting its architecture provides broader temporal integration suited for sustained cognitive states.

\subsection{External EEG baselines --- Full Finetune}
\label{app:external_ft}

\begin{table}[h]
\centering
\caption{External EEG foundation model comparison --- \textbf{Full Finetune} (FT). \textbf{Bold} = column best. All models evaluated under the same 6-task protocol with identical subject splits.}
\label{tab:external_ft}
\resizebox{\linewidth}{!}{
\begin{tabular}{lccccccc}
\toprule
Model & motor & workload & hmc & siena & adftd & bcic & Mean \\
\midrule
EEGPT~\citep{wang2024eegpt} & $0.513{\pm}.007$ & $0.668{\pm}.015$ & $0.712{\pm}.005$ & $0.795{\pm}.031$ & $0.404{\pm}.022$ & $0.280{\pm}.041$ & 0.562 \\
LaBraM~\citep{jiang2024labram} & $0.249{\pm}.001$ & $0.501{\pm}.002$ & $0.645{\pm}.011$ & $0.813{\pm}.018$ & $0.290{\pm}.060$ & $0.259{\pm}.009$ & 0.460 \\
CBraMod~\citep{wang2025cbramod} & $0.426{\pm}.015$ & $0.554{\pm}.044$ & $0.709{\pm}.011$ & $0.847{\pm}.036$ & $0.319{\pm}.026$ & $0.303{\pm}.031$ & 0.526 \\
BIOT~\citep{yang2023biot} & $0.372{\pm}.013$ & $0.570{\pm}.092$ & $0.710{\pm}.005$ & $0.747{\pm}.026$ & $0.449{\pm}.038$ & $0.314{\pm}.040$ & 0.527 \\
CSBrain~\citep{zhou2025csbrain} & $0.570{\pm}.023$ & $0.605{\pm}.028$ & $0.705{\pm}.010$ & $0.782{\pm}.036$ & $0.432{\pm}.021$ & $0.350{\pm}.015$ & 0.574 \\
REVE~\citep{elouahidi2025reve} & $0.612{\pm}.004$ & $\mathbf{0.703}{\pm}.011$ & $0.724{\pm}.002$ & $\mathbf{0.863}{\pm}.033$ & $0.460{\pm}.050$ & $0.322{\pm}.014$ & 0.614 \\
\midrule
Dense & $0.561{\pm}.015$ & $0.648{\pm}.008$ & $\mathbf{0.736}{\pm}.003$ & $0.853{\pm}.001$ & $0.504{\pm}.046$ & $0.335{\pm}.038$ & 0.606 \\
\textbf{AXON} & $\mathbf{0.625}{\pm}.005$ & $0.685{\pm}.006$ & $0.732{\pm}.009$ & $0.859{\pm}.007$ & $\mathbf{0.604}{\pm}.022$ & $\mathbf{0.428}{\pm}.033$ & \textbf{0.656} \\
\bottomrule
\end{tabular}
}
\end{table}

\section{Extended Implementation Details \& Pretraining}
\label{app:implementation}

\paragraph{Pretraining corpus.}
The pretraining corpus pools four data sources (Table~\ref{tab:pretrain_corpus}). Two are publicly available: the Temple University Hospital EEG Corpus (TUH-EEG)~\citep{obeid2016tuh} and the I-CARE dataset~\citep{amorim2023icareeeg}. Two are internal clinical EEG collections acquired under institutional ethics approval and de-identified before use; these are not publicly released but are described below to enable reproducibility assessment.

\begin{table}[h]
\centering
\caption{Pretraining corpus composition. All sources are pooled into a single unlabeled pretraining set; no downstream task labels are used during pretraining. Internal sources are marked~$\dagger$.}
\label{tab:pretrain_corpus}
\small
\begin{tabular}{lrrll}
\toprule
Source & Subjects & Hours & Clinical context & Availability \\
\midrule
TUH-EEG~\citep{obeid2016tuh} & ${\sim}$15{,}000 & ${\sim}$25{,}000 & Mixed clinical referrals & Public \\
I-CARE~\citep{amorim2023icareeeg} & ${\sim}$600 & ${\sim}$33{,}000 & Post-cardiac-arrest ICU & Public \\
Internal-A$\,^\dagger$ & 4{,}539 & 2{,}546 & Routine clinical neurophysiology & Not released \\
Internal-B$\,^\dagger$ & 1{,}050 & 435 & Multi-centre research EEG & Not released \\
\bottomrule
\end{tabular}
\end{table}

\noindent\textit{Internal-A} comprises routine clinical EEG recordings (resting-state, hyperventilation, and photic stimulation protocols) collected across hospital neurophysiology departments. \textit{Internal-B} comprises multi-centre research EEG recordings acquired under a national research programme. Both internal datasets were recorded with standard 10-20 montage systems at sampling rates of 250--512\,Hz and de-identified (all patient identifiers, dates, and institution codes removed) before inclusion. All internal data collection was conducted under institutional ethics board approval with informed consent or waiver of consent for retrospective de-identified use. Subject identities are verified disjoint across all four pretraining sources and all six downstream evaluation datasets.

Recordings span diverse acquisition settings with \textbf{variable electrode configurations}: systems range from compact 16-channel ambulatory devices to full 256-channel research amplifiers, and not every recording contains all standard 10-20 electrodes. We retain only recordings whose channel header resolves to a subset of the international 10-20 montage, then extract the available 10-20 electrode positions per recording. Because channel count varies across sources, we adopt $C\!=\!21$ as the representative value throughout this paper; this is the mode channel count across the retained pretraining corpus. Preprocessing: (1)~resample to $f_s\!=\!200$\,Hz; (2)~notch filter at 50 and 60\,Hz; (3)~bandpass [0.5, 99.5]\,Hz; (4)~per-channel z-score normalisation; (5)~clip at ${\pm}15\sigma$; (6)~segment into non-overlapping 10-second windows.

We train encoders using masked autoencoding (MAE). EEG signal is divided into patches of 200 samples (1\,s at $f_s\!=\!200$\,Hz) with a 20-sample overlap and 180-sample (0.9\,s) stride between consecutive patch start positions, yielding $T\!=\!11$ patches per channel per 10-second window. Each channel-time patch becomes a single token via a learnable linear projection. Tokens receive a split positional encoding: spatial coordinates $p_c = (x,y,z)$ (3D head positions in millimetres, standard 10-20 montage) are projected with a learned linear layer to produce $\mathrm{PE}_S$; the temporal patch index receives a fixed sinusoidal encoding $\mathrm{PE}_T$. The two components are summed: $\mathrm{PE}(i) = \mathrm{PE}_S(c_i) + \mathrm{PE}_T(t_i)$. The encoder processes only the \textbf{visible} tokens (55\% masking ratio, spatiotemporal block masking with spatial radius 3.0 and temporal radius 3.0); masked token positions receive no encoder gradient. A lightweight 4-layer dense Transformer decoder takes the encoded visible tokens plus learned mask-slot embeddings and reconstructs all patches. Training minimises L1 reconstruction loss over masked patches plus an auxiliary pooled-attention reconstruction loss ($\lambda\!=\!0.5$). The auxiliary head applies cross-attention pooling over the concatenated outputs of all encoder MHA layers: a single learned query token attends over the layer-wise output tokens to produce a compact global representation. This pooled token is then repeated to match the number of masked positions, enriched with positional encodings, and passed through a 2-layer FFN to reconstruct the masked patches under a separate L1 loss. The total pretraining loss is $\mathcal{L} = \mathcal{L}_{\text{primary}} + \lambda \cdot \mathcal{L}_{\text{aux}}$.

\subsection{Pretraining hyperparameters}
\label{app:hyperparameters}

\begin{table}[h]
\centering
\caption{Pretraining hyperparameters (AXON). The dense baseline uses the same schedule and corpus; it differs only in the encoder attention operator.}
\label{tab:pretrain_hyp}
\small
\begin{tabular}{ll}
\toprule
Hyperparameter & Value \\
\midrule
Batch size & 4096 \\
Epochs & 50 \\
Peak LR & $2.4\!\times\!10^{-4}$ \\
LR schedule & CosineAnnealingLR ($T_{\max}\!=\!20$); \\
Optimiser & fused AdamW ($\beta_1\!=\!0.9$, $\beta_2\!=\!0.95$, $\lambda\!=\!0.05$) \\
Gradient clip ($\ell_2$ norm) & 1.0 \\
Gate $\tau$ warmup & $2.0\!\to\!1.0$ over 1500 steps \\
Mask ratio & 0.55 (spatiotemporal block masking) \\
Spatial block radius & 3.0 patch indices (i.e., 3 electrode positions in the canonical 10-20 ordering) \\
Temporal block radius & 3.0 patch indices (i.e., 3 consecutive time patches, $\approx$2.7\,s) \\
Dropout mask ratio & 0.3 \\
Coordinate noise $\sigma$ & 0.25 \\
Auxiliary loss weight & 0.5 \\
Model dim $d$ & 512 \\
Encoder layers / heads & 22 / 8 \\
Decoder layers & 4 (dense, full attention) \\
Precision & bfloat16 (autocast) \\
\bottomrule
\end{tabular}
\end{table}
Pretraining was conducted on a single compute node equipped with 8 NVIDIA H200 GPUs (143,771 MiB memory each, $\approx$1.1 TiB total GPU memory) and $\approx$2.2 TiB of host RAM.
\subsection{Parameter count and compute}
\label{app:params}

AXON is not parameter-matched to the dense baseline. Replacing one dense attention operator with two separate full-width temporal and spatial operators adds approximately 26M parameters (22.5\%).

\paragraph{Independent projections.}
The temporal and spatial paths use separate QKV and output projection matrices $(Q^{(T)}, K^{(T)}, V^{(T)}, O^{(T)})$ and $(Q^{(S)}, K^{(S)}, V^{(S)}, O^{(S)})$. This is deliberate: the features needed to select which temporal patch to attend to (e.g., spectral power at a rhythm band) are different from those needed to select which electrode to attend to (e.g., lateralised activation). Shared projections would force both axes through the same feature bottleneck, limiting specialisation. The cost is a $2\times$ parameter increase in the attention projections per layer relative to a single dense attention, but the number of pairwise attention interactions is reduced by a factor of $\approx(CT)/(C+T)$ (from $(CT)^2$ to $CT^2+TC^2$). 

To control for this, we trained a parameter-matched dense baseline with hidden dimension 568 (142.5M total parameters, within 0.5\% of AXON's 141.9M). This model uses identical pretraining (same data, optimizer, epochs, mask ratio) and differs only in hidden dimension.

\begin{table}[h]
\centering
\caption{Parameter count, attention complexity, and downstream performance. Dense-L controls for the capacity difference by matching AXON's total parameter count.}
\label{tab:params}
\small
\begin{tabular}{lcccc}
\toprule
Model & Total params & Attn ops/layer$^\ddagger$ & Mean LP & Mean FT \\
\midrule
Dense     & 115.9M & $O((CT)^2) = 53{,}361$ & 0.550 & 0.606 \\
Dense-L & 142.5M & $O((CT)^2) = 53{,}361$ & 0.555 & 0.633 \\
AXON-withGlobal         & $\sim$167M & $> (CT)^2$ & 0.555 & 0.619 \\
\textbf{AXON} & \textbf{141.9M} & $O(CT^2+TC^2)=7{,}392$ & \textbf{0.579} & \textbf{0.656} \\
\bottomrule
\end{tabular}
\smallskip

\small$^\ddagger$Computed for $C=21$ channels, $T=11$ time patches (mode of pretraining corpus): Dense $(CT)^2=53{,}361$; AXON $CT^2+TC^2=7{,}392$ (${\approx}7\times$ less).
\end{table}

Three observations address the parameter concern. First, the parameter-matched dense baseline (dim=568, 142.5M) uses nearly identical capacity to AXON (141.9M) with the same dense attention topology. It achieves Mean LP 0.555 and Mean FT 0.633: higher than the original dense baseline (0.550 / 0.606), demonstrating that the extra parameters provide some benefit, but still falling short of AXON by 2.4 points on Mean LP and 2.3 points on Mean FT ($+$4.3\% and $+$3.6\% relative). The AXON advantage persists after capacity matching. Second, AXON replaces global quadratic mixing with two axis-factorized operators, reducing pairwise attention interactions by $\sim$7$\times$ per layer despite the added projections. Because the sequence length ($N\!=\!231$) is small relative to $d\!=\!512$, the $O(Nd^2)$ projection costs dominate total FLOPs; the computational advantage of axis factorization therefore lies in the structured prior, not in raw speed. Third, the AXON-withGlobal variant has substantially more parameters (${\sim}$167M, adding a global QKV on top of temporal and spatial branches) and still underperforms AXON by 2.4 points on Mean LP and 3.7 points on Mean FT. If the gains were explained by parameter count, AXON-withGlobal should win. The parameter-matched dense and AXON-withGlobal comparisons together isolate the inductive bias, not a capacity effect.

\paragraph{Empirical gate statistics.}
The gate diagnostic (22 layers $\times$ 6 datasets) shows that the trained gate is neither trivially uniform nor collapsed. Mean axis weights: $\alpha_\text{mean}=0.441$ (temporal), $\beta_\text{mean}=0.559$ (spatial). Mean gate entropy ratio: 0.932 (scale 0--$\log 2$, where 1.0 is fully uniform). No layer falls below the collapse threshold of 0.40. Gate intervention experiments (Table~\ref{tab:gate_interventions}) show that the dominant learned structure is a soft depth-dependent anisotropy schedule (layer-mean gates drop only $-$2.1\% vs learned), while exact per-token gate assignment is a secondary effect (shuffled gates drop only $-$1.3\%).

\paragraph{Stop-gradient.}\label{app:stopgrad}
The stop-gradient on the gate input means the encoder receives no gradient from the routing decision; it is trained only by the reconstruction loss. We included it as a precaution so that the gate could not reshape the encoder's representations during training. The ablation shows it makes no measurable difference: removing it, so that the gate reads the live representation instead of a detached copy~\citep{chen2021simsiam}, changes Mean LP by only $-$0.002 (0.577 vs.\ 0.579). It is a safe default, not a source of gain; our reported results do not hinge on it.

\paragraph{Temperature annealing.}
The gate softmax is divided by a temperature $\tau$ annealed from $\tau_\mathrm{start}=2.0$ to $\tau_\mathrm{end}=1.0$ over the first 1500 training steps:
\[
\tau_t = \tau_\mathrm{start} + \left(\tau_\mathrm{end} - \tau_\mathrm{start}\right) \cdot \min\!\left(1, \frac{t}{1500}\right).
\]
During warmup the gate is soft, allowing both axes to receive gradient from the MAE reconstruction loss. Both branches therefore develop useful representations before the gate sharpens. Gate entropy is monitored throughout warmup; a drop below 0.40 before warmup ends would indicate premature routing commitment and would be corrected by increasing $\tau_\mathrm{start}$ or extending the warmup window.

\section{Downstream Tasks \& Evaluation Protocol}
\label{app:downstream}

We evaluate with two protocols: \textbf{linear probing} (LP), where the encoder is frozen and only the classification head is trained, and \textbf{full finetuning} (FT), where all encoder weights are updated. The classification head is: \texttt{AdaptiveAvgPool1d} $\to$ \texttt{Linear}(512, 128) $\to$ ELU $\to$ \texttt{Dropout}(0.3) $\to$ \texttt{Linear}(128, $K$), where $K$ is the number of classes. The downstream optimiser is AdamW (weight decay $0.01$) with cosine-annealing LR (peak $2\!\times\!10^{-4}$, min $2\!\times\!10^{-5}$) preceded by a 5-epoch linear warmup from $2\!\times\!10^{-6}$. 30 training epochs. The metric is \textbf{balanced accuracy} throughout.  Subject splits are disjoint across train, validation, and test for every dataset; the same splits are shared by all models.

\begin{table}[h]
\centering
\caption{Downstream evaluation datasets. All splits are subject-disjoint.}
\label{tab:datasets}
\small
\resizebox{\linewidth}{!}{
\begin{tabular}{lllcll}
\toprule
Dataset & Task & Classes & Window & Train split & Val / Test split \\
\midrule
motor\_mv\_img~\citep{schalk2004bci2000,goldberger2000physionet} & Motor imagery (L/R/both/foot) & 4 & 4\,s & Subj.~0--69 & Subj.~70--88 / 89--109 \\
bcic\_2a~\citep{tangermann2012bcicompiv} & Motor imagery BCI (4 limbs) & 4 & 4\,s & Subj.~1--5 & Subj.~6--7 / 8--9 \\
workload~\citep{lim2018stew} & Cognitive workload (low/high) & 2 & 4\,s & $\approx$72\% subj. & $\approx$14\% / 14\% subj. \\
hmc~\citep{alvarezestevez2022hmc} & Sleep staging (5 stages) & 5 & 30\,s & $\approx$67.6\% subj. & $\approx$16.2\% / 16.2\% subj. \\
siena\_scalp~\citep{detti2020siena} & Seizure detection (ictal/interictal) & 2 & 10\,s & $\approx$70\% subj. & $\approx$15\% / 15\% subj. \\
adftd~\citep{miltiadous2023adftd} & Dementia (AD/FTD/Healthy) & 3 & 10\,s & $\approx$70\% subj. & $\approx$15\% / 15\% subj. \\
\bottomrule
\end{tabular}
}
\end{table}

\subsection{Downstream tasks}

We evaluate across six datasets covering BCI, cognitive, and clinical tasks. All evaluations use disjoint subject splits and balanced accuracy. Table~\ref{tab:downstream} summarises the key discriminative challenge of each task and explains why isotropic attention is an unfavourable inductive bias.

\begin{table}[h]
\centering
\caption{Downstream evaluation tasks.}
\label{tab:downstream}
\small
\begin{tabular}{llcl}
\toprule
Dataset & Family & Classes & Key discriminative structure \\
\midrule
motor\_mv\_img & BCI & 4 & Lateralized mu/beta ERD at movement onset \\
bcic\_2a & BCI & 4 & Fine lateralization, 4 limb classes \\
workload & Cognitive & 2 & Sustained frontal-parietal synchrony\\
hmc & Clinical & 5 & Broadband spectral stage transitions \\
siena\_scalp & Clinical & 2 & Spatially propagating ictal discharge \\
adftd & Clinical & 3 & Diffuse cortical slowing, theta excess \\
\bottomrule
\end{tabular}
\end{table}

Motor imagery tasks are most sensitive to interaction-isotropic mixing: mu/beta ERD lateralization~\citep{pfurtscheller1999erders} (left vs.~right hand) is the primary discriminative signal, and averaging across all tokens via dense global attention can suppress this asymmetry.

\subsection{Training budget}
\label{app:budget}
Table~\ref{tab:main_results} reports the best-validation-Mean-LP checkpoint (held-out subjects, never test), not the final epoch. To show the budget does not drive the result, we linear-probed AXON and Dense at epochs 5--50 (Mean LP over the six tasks, single seed; Table~\ref{tab:budget}). Validation-best was epoch 10 (AXON) and epoch 9 (Dense); at the final epoch AXON still leads 0.568 vs.\ 0.543. The ranking is stable at every budget: AXON leads by $+$0.025--0.028 Mean LP and Dense never catches up, so the gain is not faster learning that more compute would erase. We do not claim it holds for unlimited training, only that it is stable across every budget we tested.

\begin{table}[h]
\centering\small
\caption{Mean LP over the six tasks at matched pretraining budgets (single seed). AXON leads at every epoch.}
\label{tab:budget}
\begin{tabular}{lcccccc}
\toprule
Epoch & 5 & 10 & 20 & 30 & 40 & 50 \\
\midrule
AXON  & 0.584 & 0.579 & 0.580 & 0.574 & 0.570 & 0.568 \\
Dense & 0.556 & 0.551 & 0.552 & 0.549 & 0.545 & 0.543 \\
\bottomrule
\end{tabular}
\end{table}

\section{Why Two Layers Connect Every Pair of Tokens}
\label{app:theory}

\subsection{Axis graph diameter}
\label{app:axis_diameter}

AXON removes the dense all-to-all path. In one layer a token attends only to tokens on its own electrode (the temporal path) and to tokens at its own time step (the spatial path). A concern is that this cuts the model off from the rest of the recording: a token on electrode $c$ at time $t$ never sees electrode $c'$ at time $t'$ directly. The proposition below shows that this is not so. On a full channel-time grid, any two tokens are connected after two layers, so the model keeps its global reach. What changes is which pairs interact directly within one layer, and how many. This is why we describe the design as changing which pairs interact directly, not the model's reach, and it is the fact behind the statement in Section~\ref{ssec:factorized} that any two tokens can still meet after two layers.

\begin{proposition}[Axis graph has diameter at most two]
\label{prop:axis_diameter}
On the full channel-time grid $\Omega=\mathcal{C}\times\mathcal{T}$, the graph with edges between tokens that share either the same channel or the same time index has diameter at most two. After two stacked axis-attention layers, any token can receive information from any other token. The number of possible one-hop attention interactions is
\[
|E_{\rm axis}|\le CT^2+TC^2 = CT(C+T),
\]
versus $|E_{\rm dense}|=(CT)^2$ for the dense graph.
\end{proposition}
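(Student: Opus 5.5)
The plan is to treat the three claims in Proposition~\ref{prop:axis_diameter} separately: the diameter bound, the two-layer reachability, and the edge count. Each is elementary, so the work is mostly in stating precisely what ``receive information'' means.

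\textbf{Diameter.} Take two distinct tokens $i=(c,t)$ and $j=(c',t')$ in $\Omega=\mathcal{C}\times\mathcal{T}$ and split into cases. If $c=c'$ or $t=t'$, then $j\in\mathcal{T}(i)$ or $j\in\mathcal{S}(i)$, so $i$ and $j$ are adjacent at distance one. Otherwise I will exhibit the intermediate token $k=(c,t')$. It lies on the same channel as $i$ and at the same time step as $j$, so $i\!-\!k\!-\!j$ is a path of length two. Fullness of the grid is used exactly here: $k$ must be present, which fails if electrode $c$ is missing or masked at time $t'$. I will note this caveat explicitly, since the encoder only sees visible tokens.

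\textbf{Two-layer reachability.} I will formalize ``receive information'' as a nonzero dependence path through the attention pattern. In one AXON block, $y_j=\alpha_j T(x)_j+\beta_j S(x)_j$. Since $\alpha_j,\beta_j$ come from a softmax, they are strictly positive, and the softmax attention weights over $\mathcal{T}(j)$ and $\mathcal{S}(j)$ are strictly positive as well. So $y_j$ structurally depends on every $x_k$ with $k\in\mathcal{T}(j)\cup\mathcal{S}(j)$, and the residual adds the dependence on $x_j$ itself.

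Composing two layers, I will argue by the chain rule, or simply through the support of the product of two attention matrices, that the second-layer output at $j$ depends on every token within graph distance two. By the diameter bound, that is all of $\Omega$. For $j$ to receive information from $i$, the path above has to be traversed in reverse: $k$ first collects from $i$ via the spatial path (shared time step $t$), and then $j$ collects from $k$ via the temporal path (shared channel). The argument is symmetric, so the hop order is not important.

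\textbf{Edge count.} This is a direct count. Temporal pairs: each of the $C$ channels contributes $T^2$ ordered pairs, including self-pairs. Spatial pairs: each of the $T$ time steps contributes $C^2$ pairs. The union therefore has at most $CT^2+TC^2=CT(C+T)$ pairs. The inequality is strict because the $CT$ self-pairs are counted twice, but the stated upper bound suffices. The dense graph has all $(CT)^2$ ordered pairs.

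I expect the main obstacle to be only the reachability step. One must avoid claiming that the information actually propagates, since weights could cancel numerically. The statement should be phrased as structural connectivity: the attention support is nonzero, which follows from the positivity of the softmax outputs. The other two steps are routine.
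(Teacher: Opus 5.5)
Your proof is correct and follows the paper's argument: the same case split on shared channel or time, an intermediate token giving a length-two path (you use $(c,t')$ where the paper uses $(c',t)$; either works on the full grid), and the same count of $C$ temporal groups of size $T$ and $T$ spatial groups of size $C$. One harmless slip: your reachability paragraph describes the hops for the intermediate $(c',t)$ (spatial at time $t$, then temporal), not for the $k=(c,t')$ you defined, where the first hop is temporal along channel $c$ and the second is spatial at time $t'$.
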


\begin{proof}
Take any two tokens $u=(c,t)$ and $v=(c',t')$. If $c=c'$ or $t=t'$, then $u$ and $v$ are connected by one axis edge. Otherwise, $u$ is connected to $(c',t)$ by a spatial edge (same time step), and $(c',t)$ is connected to $v=(c',t')$ by a temporal edge (same channel). Every pair is thus connected by a path of length at most two. The edge-count bound follows from $C$ temporal groups of size $T$ and $T$ spatial groups of size $C$.
\end{proof}

\section{Ablation Summary}
\label{app:ablation_full}

\subsection{Interpreted ablation summary}

\begin{table}[h]
\centering
\caption{EEG ablation summary. Values are unweighted mean balanced accuracy across six downstream tasks. Diagnostic variants are included to show task-specific tradeoffs; bold indicates the best mean in each column among evaluated variants.}
\label{tab:ablation}
\resizebox{\linewidth}{!}{
\begin{tabular}{lccp{7.5cm}}
\toprule
Variant & Mean LP & Mean FT & Purpose of comparison \\
\midrule
Dense & 0.550 & 0.606 & Dense attention baseline with split positional encoding. \\
Dense-L & 0.555 & 0.633 & Parameter-matched dense control. \\
Divided-ST & 0.560 & 0.628 & Sequential divided space-time operator~\citep{bertasius2021space} in the identical setup (Appendix~\ref{app:divided_st}). \\
Dense + joint PE & 0.540 & 0.600 & Tests whether joint positional encoding improves over split PE. \\
Dense + 2D-RoPE & 0.554 & --- & Axial 2D rotary position embedding on the dense graph; single seed vs.\ a single-seed dense reference (0.546). \\
AXON-Fixed & 0.558 & 0.640 & Tests axis factorization without token-dependent routing. \\
Fixed gate, schedule init & 0.555 & 0.636 & Tests whether initializing a fixed gate to AXON's learned layer schedule is sufficient. \\
AXON-TokenGated & 0.578 & 0.643 & Tests token-gated temporal/spatial factorization without multiscale temporal windows. \\
\textbf{AXON} & \textbf{0.579} & \textbf{0.656} & Final no-global model with token axis gate and two-scale temporal branch. \\
AXON without stop-gradient & 0.577 & --- & Gate reads the live representation instead of a detached copy; single seed. \\
\midrule
AXON-withGlobal & 0.555 & 0.619 & Tests whether adding a dense global branch improves the factorized encoder. \\
KNN spatial constraint & 0.510 & 0.609 & Tests whether local electrode-neighbour masking helps the spatial path. \\
Head-split multiscale & 0.553 & 0.628 & Tests an alternative multiscale temporal implementation. \\
Temporal decay & 0.556 & 0.642 & Diagnostic locality bias; helps some event-like tasks but reduces mean performance. \\
AXON-Decay continuation & 0.579 & 0.633 & Diagnostic continuation run; maintains LP but reduces FT. \\
\bottomrule
\end{tabular}
}
\end{table}

\paragraph{Reading the ablations.}
The ablation summary supports three conclusions. First, axis factorization improves over dense attention even after controlling for parameter count (Dense-L). Second, token gating provides the largest additional LP gain over fixed factorization ($+$0.020), while the two-scale temporal path contributes a smaller task-selective refinement. Third, the negative variants show that adding a dense global branch, hard spatial masks, or extra temporal-scale machinery does not improve mean transfer. AXON is the only variant that achieves both the best mean LP and the best mean FT; variants that improve selected tasks (e.g., AXON-Decay) do not improve the mean FT objective and are therefore treated as diagnostic, task-selective extensions. The token gate's LP advantage over AXON-Fixed arises from training-time gradient diversity rather than inference-time routing; see Appendix~\ref{app:schedule_init} for the schedule-initialized fixed gate experiment that isolates this effect.

\paragraph{Three windows.}
We also tried three windows (two, five, and all patches) with an entropy penalty that pushes the scale mix to use all three. Mean LP fell to 0.558. The reconstruction loss barely distinguishes the three windows, so the penalty dominates and the mix stays close to uniform; two windows with a long-window start was the best we found.

\paragraph{Temporal locality (AXON-Decay).}\label{app:temporal_locality_full}
AXON-Decay adds a learnable temporal-distance bias to the full-window temporal path, so that nearby time steps get more weight. It helps tasks driven by short events, motor imagery (LP 0.455 $\to$ 0.487) and seizure detection (siena LP 0.867 $\to$ 0.894); it matches AXON on Mean LP (0.579), lowers Mean FT (0.656 $\to$ 0.633), and hurts workload, a sustained-state task. So the best temporal context length depends on the task, and we treat AXON-Decay as a diagnostic, not a replacement for AXON.

\paragraph{2D RoPE control.}
We ran axial 2D RoPE~\citep{su2024roformer,heo2024rope}: each head (dim 64) is split so attention depends only on the relative $(\Delta t, \Delta c)$ offset while the graph stays fully dense. Result (same pipeline as Table~\ref{tab:main_results}; single seed, vs.\ a single-seed dense reference): Dense 0.546 $\to$ Dense+2D-RoPE 0.554, a $+$0.008 Mean LP gain concentrated almost entirely on adftd (our highest-variance task, $\pm$0.034 across seeds). Axis factorization gives $+$0.029, ${\approx}$3.6$\times$ the RoPE delta. Thus 2D RoPE does not substitute for factorization (though it does not fail either). This is expected: RoPE changes how position enters the scores but leaves the graph dense; every off-axis pair stays available, whereas AXON removes those edges by construction. After pretraining the dense encoder still places 30--63\% of its attention on off-axis pairs (token pairs sharing neither the same electrode nor the same time step; Table~\ref{tab:relation_mass}, Figure~\ref{fig:relation_mass}), so it does not suppress those interactions on its own, and a relative-position code gives it no mechanism to. The two are orthogonal, not substitutes.

\begin{table}[!t]
\centering
\caption{Architectural variants summary. All models use the same pretraining corpus and schedule; they differ only in attention structure.}
\label{tab:variants}
\small
\begin{tabular}{lcccc}
\toprule
Variant & Temporal windows & Global path & Key change & Mean LP \\
\midrule
Dense & W=$-$1 & Full & Reference & 0.550 \\
Divided-ST & W=$-$1 & No & Sequential T$\to$S blocks & 0.560 \\
AXON-Fixed & W=$-$1 & No & Fixed factorized axes & 0.558 \\
AXON-TokenGated & W=$-$1 & No & Per-token axis gating & 0.578 \\
AXON-withGlobal & W=[5,$-$1] & Yes & +Global dense path & 0.555 \\
\textbf{AXON} & \textbf{[5,$-$1]} & \textbf{No} & +Two-scale gate & \textbf{0.579} \\
AXON-Decay & {[5,$-$1]} + decay & No & +Temporal locality & 0.579 \\
\midrule
Head-split multiscale & [5,$-$1] static & No & Static scale split & 0.553 \\
3-scale gate & [2,5,$-$1] & No & 3-scale gate & 0.558 \\
\bottomrule
\end{tabular}
\end{table}

\subsection{Divided space-time baseline}
\label{app:divided_st}
The divided space-time baseline is pretrained in our exact EEG setup: same pooled corpus, same 10-20 montage and tokenisation, batch size 4096, peak LR $2.4\times10^{-4}$, fused AdamW, bfloat16, the identical pretraining budget, the identical MAE objective (L1 on masked patches plus the pooled-attention auxiliary loss), and the same split positional encoding. Only the attention operator differs. We implement the divided space-time block of \citet{bertasius2021space} faithfully: attention along time, then attention along channels, each with its own residual connection and with the temporal output projection, in place of AXON's parallel token-gated axis mixture. Parameter count and per-layer compute are comparable to AXON's ${\sim}$141.9M. The two models converge to matched reconstruction loss, so neither is under-trained relative to the other. Metric is balanced accuracy on subject-disjoint splits identical across models; per-task results are in Table~\ref{tab:main_results}, where $\pm$ is the standard deviation across the 3 downstream seeds.

Divided-ST recovers part of the gap to AXON, so restricting attention to the two axes helps on its own. The rest of the gap is what the parallel, gated composition adds. In Divided-ST the axis order is fixed and identical for every token and every layer; in AXON the gate sets the temporal/spatial balance per token and per layer, and Appendix~\ref{app:gate_mechanism} shows that this learned per-layer balance is where most of the gate's benefit lies.

\section{Analysis of the Global-Path Variant}
\label{app:global_shortcut}
Section~\ref{ssec:global} states our hypothesis: under masked pretraining the dense global path is a shortcut, averaging over visible tokens to guess a masked patch, so the encoder builds fewer axis-specific features. Here we give the evidence behind that reading.

\paragraph{Same reconstruction loss, different features.}
AXON and AXON-withGlobal reach the same reconstruction loss (Figure~\ref{fig:recon_loss}), yet AXON-withGlobal transfers worse (Mean LP 0.555 vs.\ 0.579, Mean FT 0.619 vs.\ 0.656). So the difference is in the features the two encoders build, not in how well they reconstruct.

\begin{figure}[h]
\centering
\includegraphics[width=0.75\linewidth]{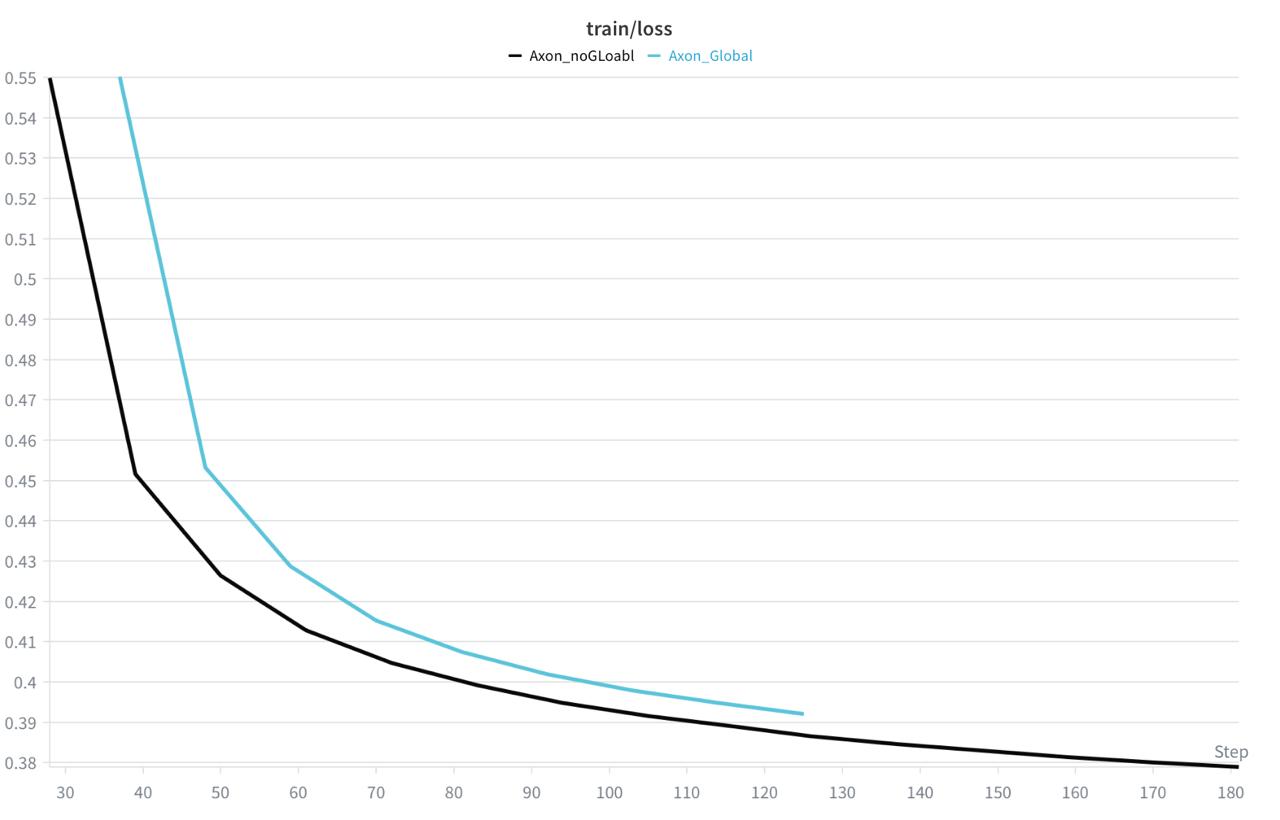}
\caption{MAE reconstruction loss during pretraining. AXON (no global path) and AXON-withGlobal converge to comparable reconstruction loss ($\approx$0.38--0.39), yet AXON-withGlobal underperforms on mean LP (0.555 vs.\ 0.579) and mean FT (0.619 vs.\ 0.656). The global path does not improve reconstruction; it changes how the encoder reconstructs, routing mass through dense averaging ($\gamma\!=\!0.724$ at layer~1) rather than through axis-structured features.}
\label{fig:recon_loss}
\end{figure}

\paragraph{Effective rank.}
Effective rank~\citep{roy2007effective} counts how many dimensions a set of features actually uses; a higher value means richer, less collapsed features. We compute it on the final mean-pooled embeddings of up to 512 validation samples per dataset (Table~\ref{tab:effective_rank}). Removing the global path raises the effective rank on five of the six datasets. The largest jump is on motor imagery, from 19.97 to 29.05, and motor imagery is also the task where AXON gains most over Dense (Table~\ref{tab:main_results}). This fits the shortcut reading: motor imagery depends on fine, local spatio-temporal structure that averaging over all tokens washes out.

\begin{table}[h]
\centering
\caption{Effective rank and representation similarity (Linear CKA) of final mean-pooled embeddings between the factorized noGlobal model and the withGlobal model.}
\label{tab:effective_rank}
\resizebox{0.7\linewidth}{!}{
\begin{tabular}{lccc}
\toprule
Dataset & \multicolumn{2}{c}{Effective Rank $\uparrow$} & CKA Similarity $\downarrow$ \\
\cmidrule(lr){2-3}
& \emph{withGlobal} & \emph{noGlobal} & \emph{withGlobal} vs \emph{noGlobal} \\
\midrule
adftd & 17.76 & \textbf{21.29} & 0.9748 \\
bcic\_2a & 66.08 & \textbf{77.66} & 0.9363 \\
hmc & 22.83 & \textbf{28.79} & 0.8662 \\
motor & 19.97 & \textbf{29.05} & 0.8412 \\
siena & \textbf{28.93} & 27.19 & 0.9634 \\
workload & 51.58 & \textbf{65.48} & 0.8698 \\
\bottomrule
\end{tabular}
}
\end{table}

\paragraph{CKA similarity.}
Linear CKA~\citep{kornblith2019cka} measures how similar two sets of features are (1.0 identical, 0.0 unrelated). The features of AXON and AXON-withGlobal are least similar on motor imagery (0.84), sleep staging (0.86) and workload (0.87), and almost identical on seizure detection and dementia (above 0.96). The global path changes the features most on the tasks with the most temporal or spatial structure, and least where the two models also perform alike. Per-layer curves for all six datasets are in Figure~\ref{fig:cka_layer_grid}.

\begin{figure}[h]
\centering
\begin{subfigure}[t]{0.48\linewidth}
\centering
\includegraphics[width=0.95\linewidth]{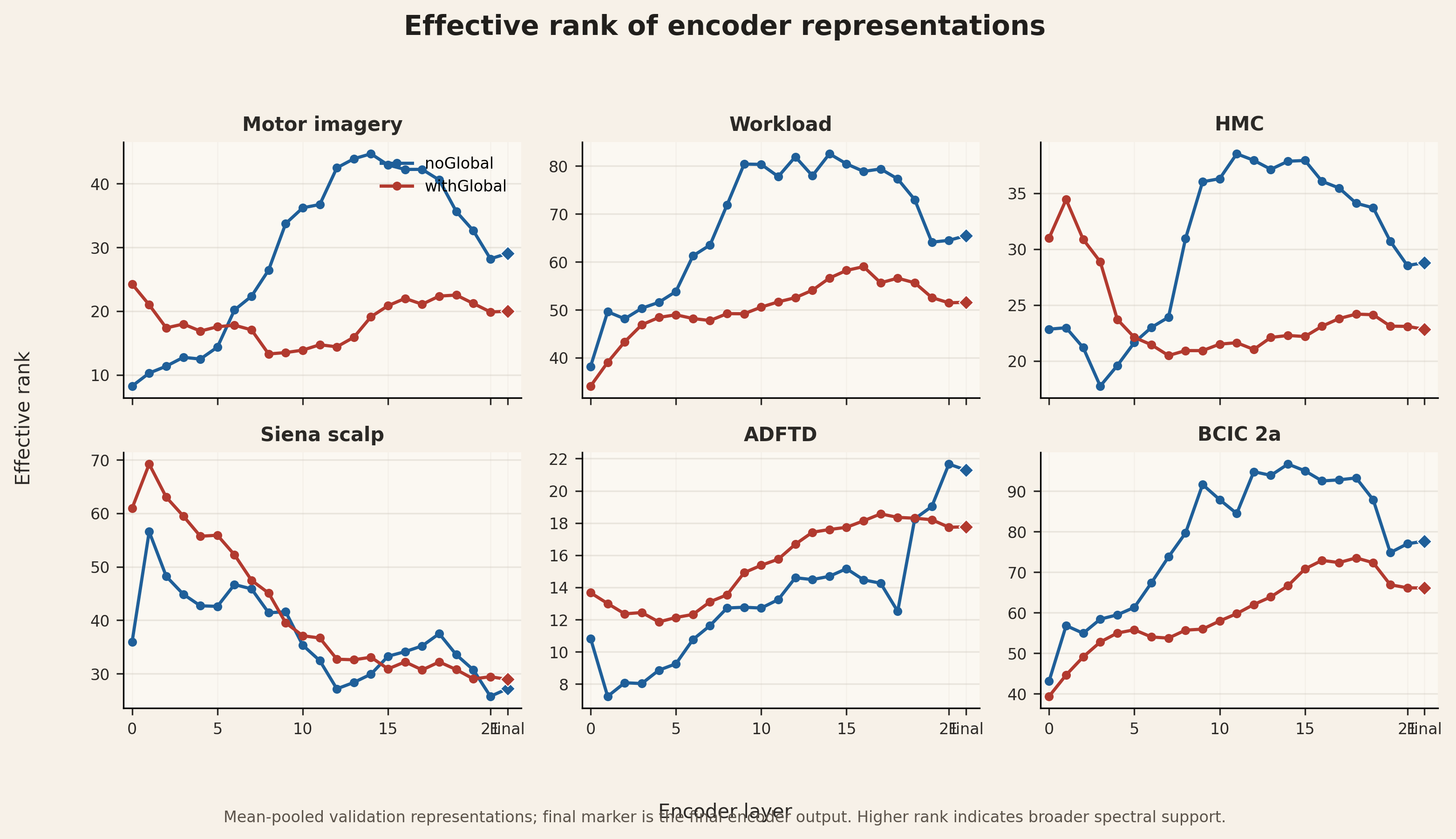}
\caption{Layer-wise effective rank, all six datasets.}
\end{subfigure}\hfill
\begin{subfigure}[t]{0.48\linewidth}
\centering
\includegraphics[width=0.95\linewidth]{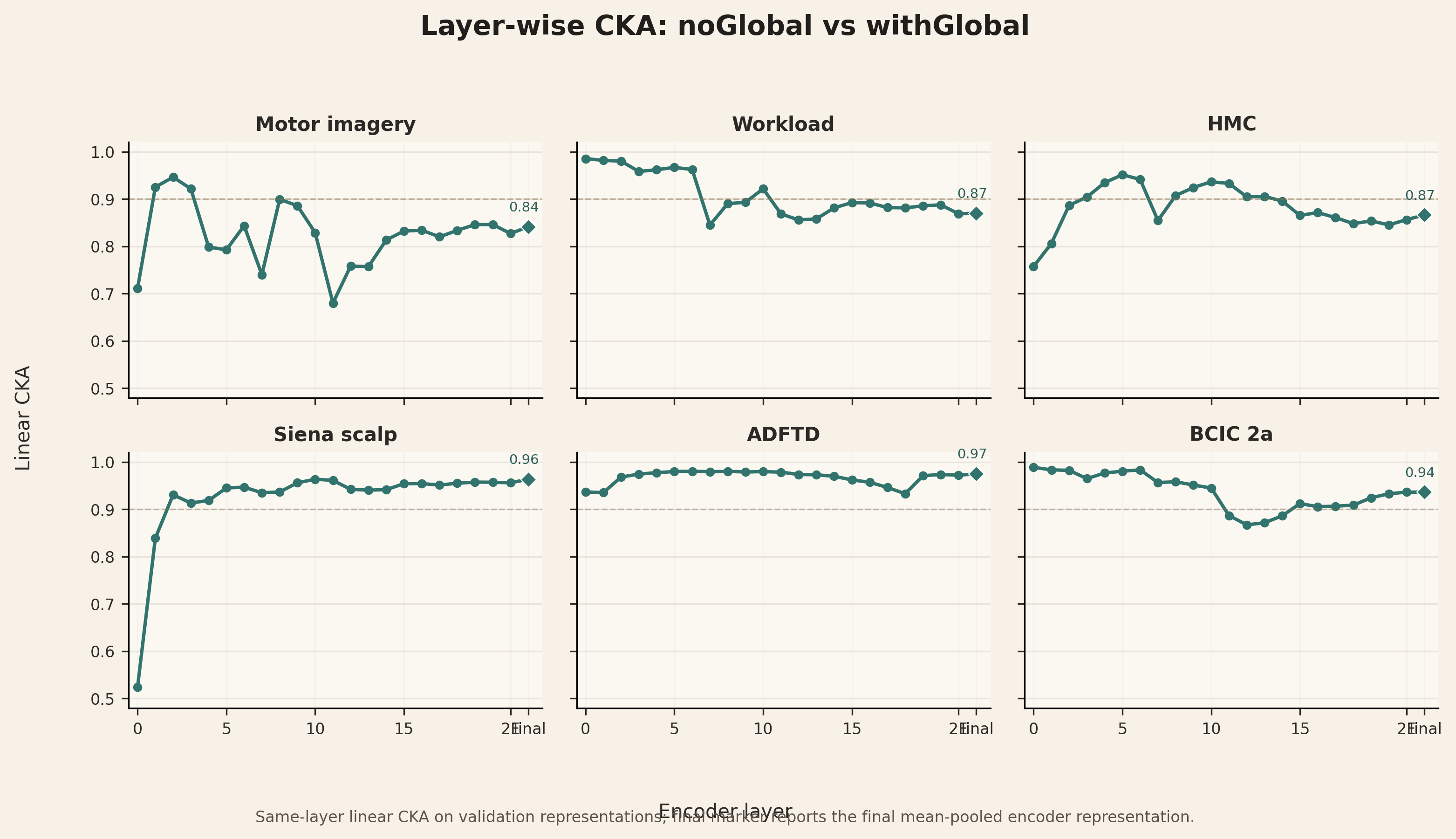}
\caption{Layer-wise Linear CKA, all six datasets.}
\end{subfigure}
\caption{Layer-wise effective rank and CKA divergence between factorized and dense-global representations.}
\label{fig:cka_layer_grid}
\end{figure}

\section{Gate Mechanism Analysis}
\label{app:gate_mechanism}

\subsection{Understanding the gate}
\label{app:gate_profile}
Figure~\ref{fig:gate_profile} shows the average gate weight per layer over the six downstream datasets. The gate is not a uniform mixer. The first layer leans on the temporal path ($\alpha = 0.745$), layers 4--7 lean on the spatial path ($\beta = 0.686$--$0.707$), and late layers return toward the temporal path ($\alpha = 0.651$ at layer 19). So the gate learns a different temporal/spatial balance at each depth. The rest of this appendix asks whether the model actually depends on these values.

\begin{figure}[h]
\centering
\includegraphics[width=0.85\linewidth]{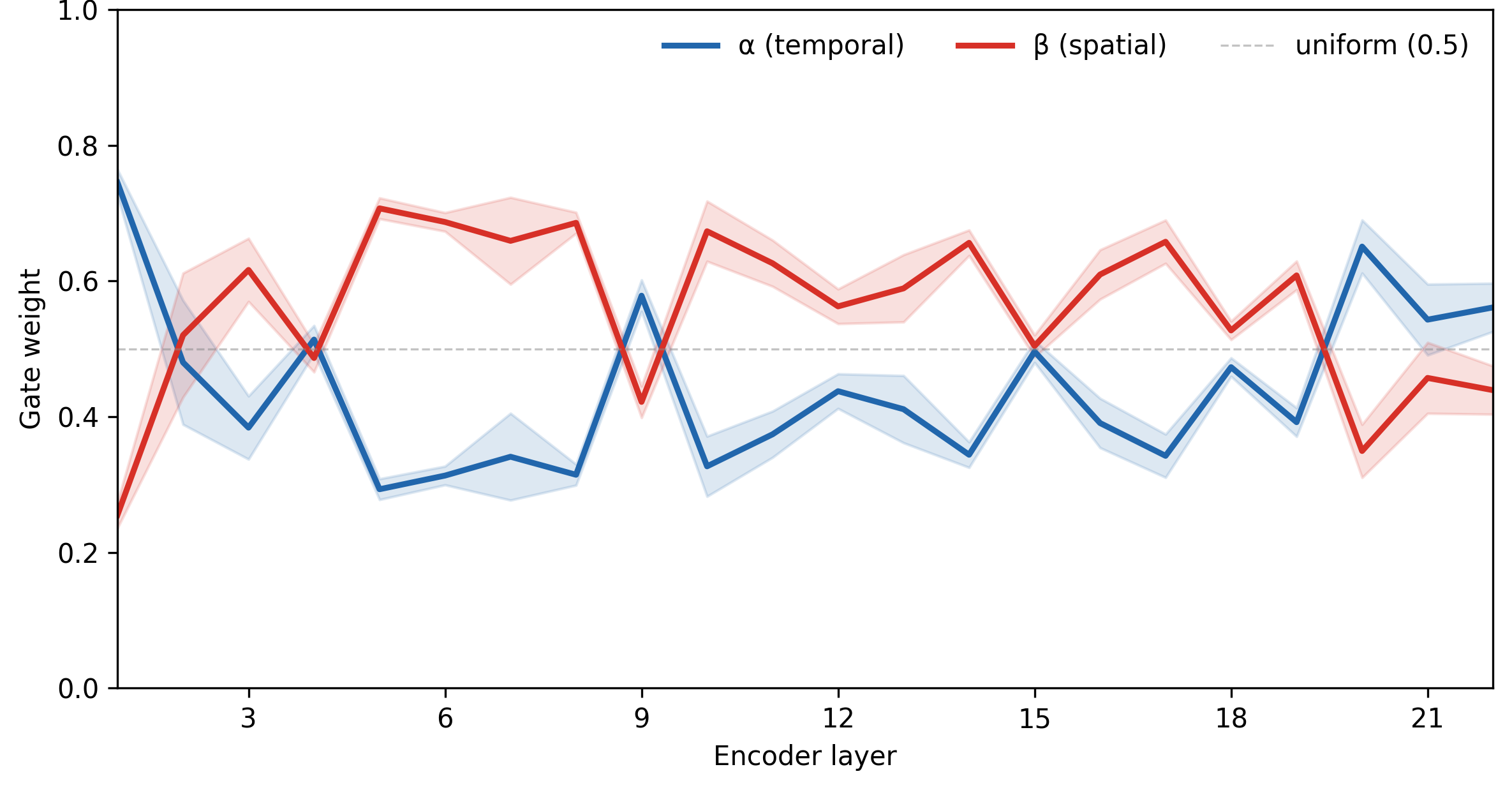}
\caption{AXON axis gate across the 22 encoder layers: gate weights averaged over all six downstream datasets (mean $\pm\,\sigma$), on validation batches with the frozen encoder. Temporal-heavy in the first layer, spatial-heavy in the middle layers, and back toward temporal in the late layers.}
\label{fig:gate_profile}
\end{figure}

\subsection{Gate intervention diagnostics}
\label{app:gate_interventions}

We override the axis gate while keeping the pretrained encoder frozen, then run linear probe evaluation (full dataset splits). Each intervention replaces the learned per-token gate $[\alpha_i, \beta_i]$ with a modified version that removes a specific component of the routing, isolating what the gate actually contributes. We group the seven interventions by the question they answer.

\paragraph{Is axis factorization itself necessary?}
\begin{itemize}[leftmargin=*,noitemsep]
\item \textbf{temporal\_only}: Force $[\alpha_i,\beta_i]=[1,0]$ for all tokens. Only the temporal path contributes.
\item \textbf{spatial\_only}: Force $[\alpha_i,\beta_i]=[0,1]$. Only the spatial path contributes.
\end{itemize}

\paragraph{Does soft mixing matter, or can the model commit to one axis?}
\begin{itemize}[leftmargin=*,noitemsep]
\item \textbf{hard\_argmax}: Take the learned gate, find the dominant axis, and set it to 1.0 with the other at 0.0. E.g., $[0.6, 0.4] \to [1.0, 0.0]$.
\item \textbf{uniform}: Force $[\alpha_i,\beta_i]=[0.5,0.5]$ for all tokens. No routing at all equal weight to both paths.
\end{itemize}

\paragraph{Is the gate's value per-token content routing, or a depth/position schedule?}
\begin{itemize}[leftmargin=*,noitemsep]
\item \textbf{layer\_mean}: Calibrate over 50 forward passes to compute the average gate per layer, averaging across all tokens and batches. At inference, every token in layer $\ell$ receives the layer-$\ell$ mean gate, regardless of content. Tests whether the depth schedule alone is sufficient.
\item \textbf{position\_mean}: Calibrate the average gate per (layer, channel, time-patch) position. At inference, a token at position $(c,t)$ in layer~$\ell$ receives the calibrated mean for that position, regardless of signal content. Tests whether position-aware routing adds value beyond the depth schedule.
\item \textbf{shuffled}: Run the gate MLP normally to produce $[\alpha_i,\beta_i]$ for each token, then randomly permute the gate values within each layer. The marginal distribution of gate values per layer is exactly preserved, but the token$\leftrightarrow$gate correspondence is destroyed. Tests whether it matters which token gets which gate value.
\end{itemize}

Table~\ref{tab:gate_interventions} reports summary statistics; Figure~\ref{fig:gate_heatmap} shows per-dataset results.

\begin{table}[h]
\centering
\caption{Gate intervention diagnostics under frozen-encoder LP. Mean is over all 6 downstream datasets. Scores are balanced accuracy on the test subjects at the validation-selected epoch, averaged over 3 downstream seeds for five datasets; siena uses a single seed and the class-balanced training loader.}
\label{tab:gate_interventions}
\small
\begin{tabular}{lcc}
\toprule
Gate mode & Mean BAC & $\Delta$ vs learned \\
\midrule
Learned (reference)        & 0.571 & --- \\
Uniform $(1/2,1/2)$       & 0.539 & $-5.6\%$ \\
Layer mean                 & 0.559 & $-2.1\%$ \\
Position mean    & 0.567 & $-0.8\%$ \\
Shuffled                   & 0.564 & $-1.3\%$ \\
Hard argmax                & 0.501 & $-12.2\%$ \\
Temporal only              & 0.471 & $-17.5\%$ \\
Spatial only               & 0.446 & $-22.0\%$ \\
\bottomrule
\end{tabular}
\end{table}

\begin{figure}[h]
\centering
\includegraphics[width=0.95\linewidth]{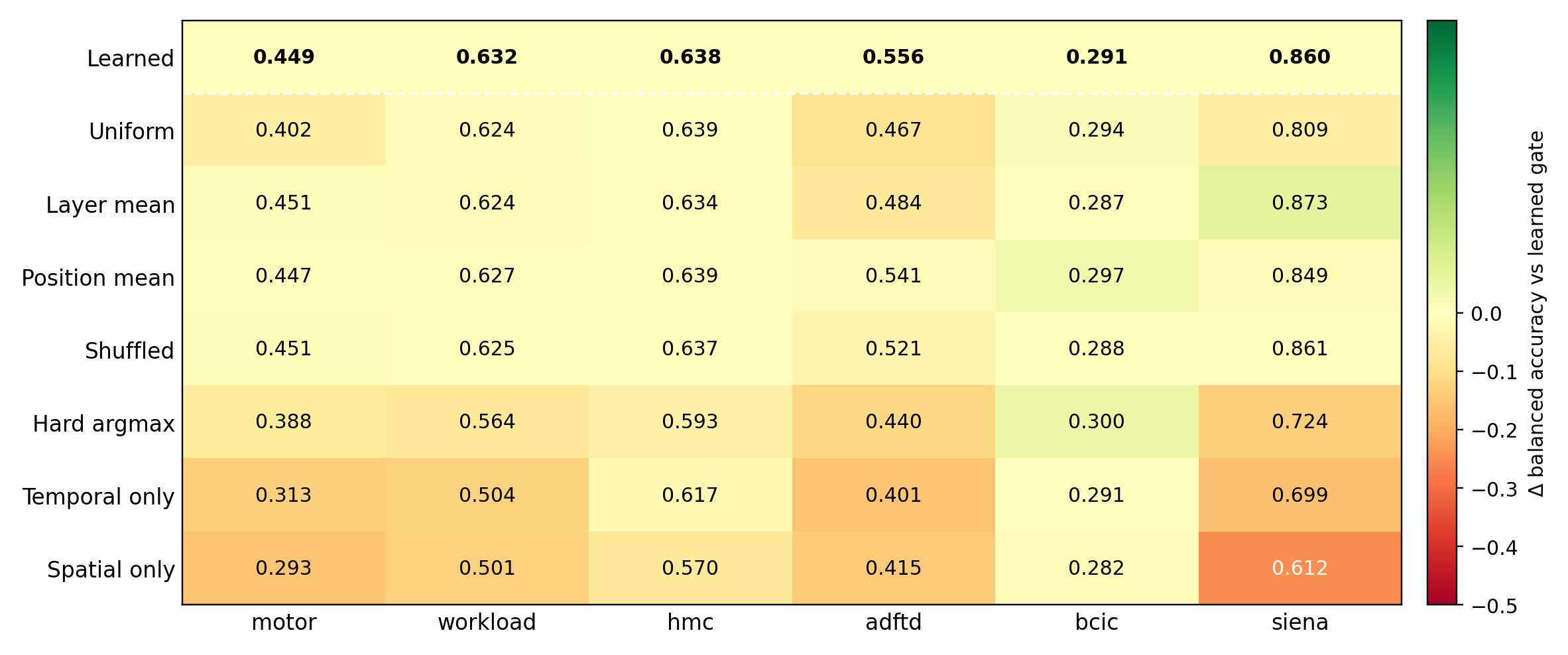}
\caption{Per-dataset gate intervention heatmap (frozen-encoder LP, 6 datasets; same protocol as Table~\ref{tab:gate_interventions}). Cell colour is the change from the learned gate. The learned gate outperforms uniform and hard-argmax routing on most datasets, but layer-mean, position-mean, and shuffled gates stay close to learned, indicating that exact token-gate alignment is not the dominant effect.}
\label{fig:gate_heatmap}
\end{figure}

\smallskip
\footnotesize
\textit{Protocol note.} The learned-gate reference is 0.571 in this intervention setting, compared with the main AXON LP score of 0.579 in Table~\ref{tab:main_results}. All intervention results are measured relative to this within-table learned reference. 
\normalsize

\subsection{Token diversity and content dependence}

We compute two scalar metrics to characterise within-layer gate variation. $D_\mathrm{token}$ measures how different individual token gates are from the layer mean; $D_\mathrm{content}$ subtracts out fixed channel/time position effects, isolating variation that depends on the signal content of the current input. $D_\mathrm{content}$ peaks sharply at layers 9 and 12 (Figure~\ref{fig:d_content}), indicating that signal-driven routing is concentrated at mid-depth rather than distributed uniformly across the encoder. However, the gate intervention results (Table~\ref{tab:gate_interventions}) show that this token-level variation is not the dominant source of downstream gain: shuffled gates drop only $-$1.3\% vs.\ learned.

\begin{figure}[h]
\centering
\includegraphics[width=0.85\linewidth]{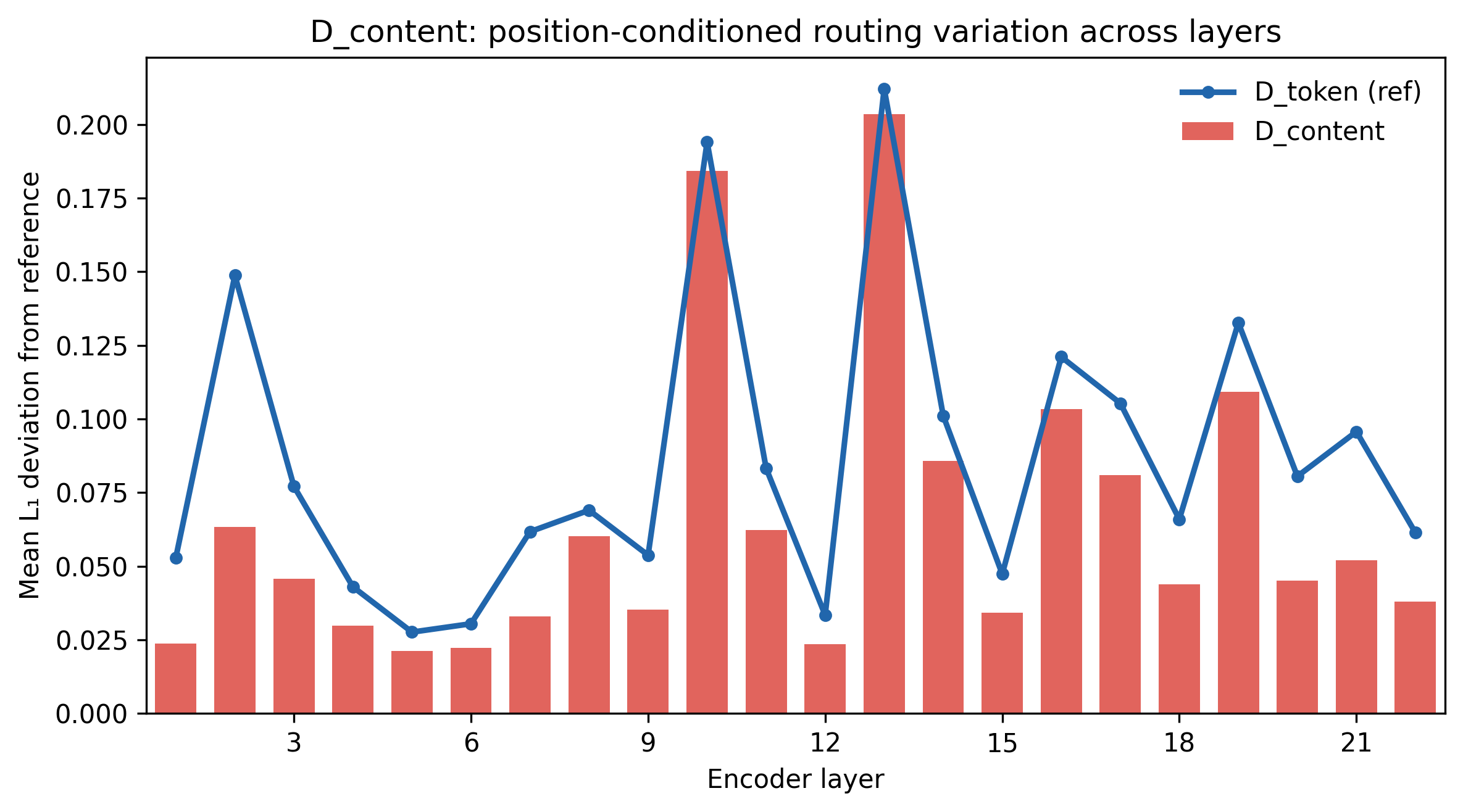}
\caption{$D_\mathrm{token}$ and $D_\mathrm{content}$ across 22 encoder layers. Peaks at layers 9 and 12 show that some layers exhibit genuine token-level gate diversity. However, the intervention results above show that this variation is not the dominant source of downstream gain: shuffled gates (which preserve the gate distribution but destroy token-gate alignment) drop only $-$1.3\% vs learned.}
\label{fig:d_content}
\end{figure}

\subsection{Analysis of the AXON-TokenGated vs AXON-Fixed gap}
\label{app:schedule_init}
Giving every token in a layer that layer's mean gate costs only $-2.1\%$ (Section~\ref{app:gate_interventions}). So at test time, one mixing weight per layer is almost enough. But AXON-Fixed learns exactly one weight per layer, and it reaches only 0.558 Mean LP, against 0.578 for AXON-TokenGated. Why?

One guess is initialisation: AXON-Fixed starts from a neutral mixture and may never find the right weight for each layer. We tested this. We trained AXON-Fixed again, this time starting each layer's weight at the value the AXON gate had learned for that layer (Section~\ref{app:gate_profile}). Nothing else changed. Table~\ref{tab:stage1b} shows the result: 0.555 Mean LP, the same as before (0.558), still far below 0.578. The right starting point does not help. So initialisation is not the reason.

What is left is how the two models train. With a per-token gate, tokens in the same layer get different mixtures during training, so the temporal and spatial paths are trained on more varied signals. At the end of training the exact per-token values no longer matter much (shuffling them costs only $-1.3\%$), but the paths they trained are better. Dropout works the same way: it does nothing at test time, but it changes the weights that training ends with.

\begin{table}[h]
\centering
\caption{AXON-Fixed re-trained with each layer's weight initialised to AXON's learned per-layer balance. For comparison: AXON-Fixed with neutral initialisation reaches 0.558 Mean LP, AXON-TokenGated 0.578.}
\label{tab:stage1b}
\small
\begin{tabular}{lcc}
\toprule
Dataset & LP BAC & FT BAC \\
\midrule
motor & 0.354 & 0.598 \\
workload & 0.585 & 0.714 \\
hmc & 0.673 & 0.732 \\
siena & 0.878 & 0.873 \\
adftd & 0.557 & 0.588 \\
bcic & 0.283 & 0.311 \\
\midrule
\textbf{Mean} & \textbf{0.555} & \textbf{0.636} \\
\bottomrule
\end{tabular}
\end{table}

\subsection{Temporal scale gate routing}

Figure~\ref{fig:scale_gate} shows the per-layer routing between the short-window ($W_s\!=\!5$) and full-window ($W_l\!=\!-1$) temporal branches across all 22 encoder layers. The scale gate strongly favours the long-window branch ($\approx$79\% routing mass), consistent with the initialisation bias toward $W_l$. A small subset of layers routes appreciable mass to the short window, suggesting that local transient structure is selectively useful at those depths.

\begin{figure}[h]
\centering
\includegraphics[width=0.85\linewidth]{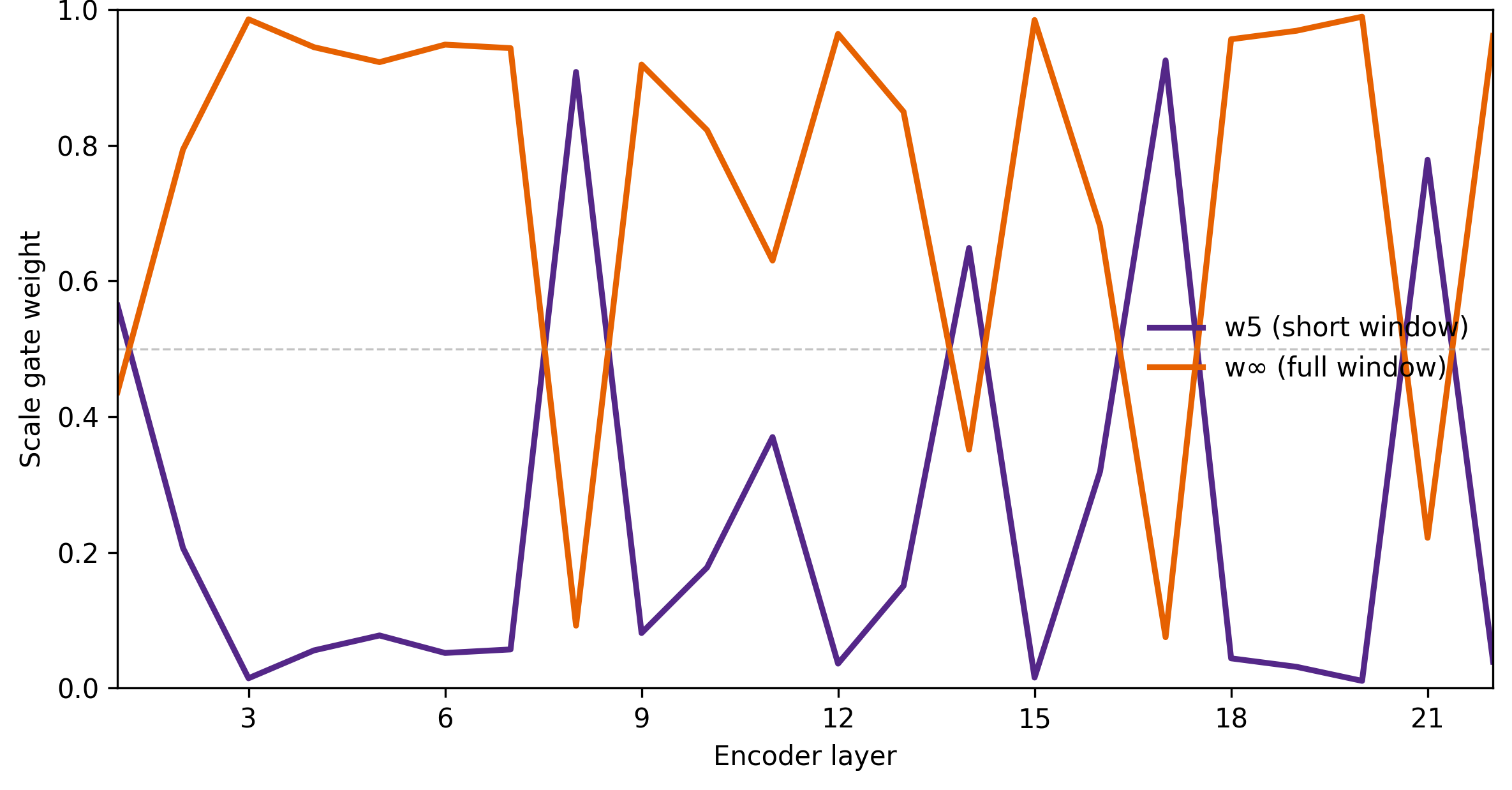}
\caption{Scale gate routing: short window ($W\!=\!5$) vs.\ full window ($W\!=\!\infty$) across 22 layers. The model usually favours broad temporal context, but uses short temporal windows in a few selected layers. The two-scale temporal design contributes $+$0.001 Mean LP and $+$0.013 Mean FT over the single-scale variant; the scale gate is a secondary improvement.}
\label{fig:scale_gate}
\end{figure}

\section{Interpreting the Attention}
\label{app:electrodes}

\subsection{Does the trained dense model use the off-axis pairs that AXON removes?}
\label{ssec:isotropic}
For a query token in the modal $C=21$, $T=11$ grid, 200 of the 231 tokens share neither its channel nor its time step. We call these off-axis pairs. An untrained dense model spreads its attention uniformly, so about 87\% of its attention starts on off-axis pairs. The question is how much of that the dense model learns to remove during pretraining. We measured it on the motor imagery grid ($C=64$, $T=4$), because motor imagery is the task with AXON's largest gain and the 64-channel layout makes the pair types easy to separate; there the uniform off-axis share is 73.8\%. After pretraining, attention within the same time step rises to 62.4\% and the off-axis share falls to 30.9\% on average (Table~\ref{tab:relation_mass}). But it never goes away: the first layer still places 63\% of its attention off-axis, almost the untrained value, and the last layer drifts back to 46\% (Figure~\ref{fig:relation_mass}). The dense model learns the axis structure only partly and unevenly across depth. AXON assigns zero off-axis attention within a layer by construction.

\begin{table}[h]
\centering\small
\caption{Relation-type attention mass (\%) in the dense baseline on the default motor\_mv\_img downstream grid ($C\!=\!64,T\!=\!4$), averaged across all 22 layers. After MAE pretraining, spatial mass rises and off-axis mass drops, but residual off-axis routing remains. AXON assigns zero one-layer off-axis mass by construction.}
\label{tab:relation_mass}
\begin{tabular}{lcccc}
\toprule
Condition & Self & Temporal & Spatial & Off-axis \\
\midrule
Uniform reference & 0.39 & 1.17 & 24.6 & 73.8 \\
Dense (trained) & 4.65 & 2.06 & 62.4 & 30.9 \\
AXON (by construction) & --- & 100 (temporal path) & 100 (spatial path) & 0 \\
\bottomrule
\end{tabular}
\end{table}

\begin{figure}[h]
\centering
\includegraphics[width=0.95\linewidth]{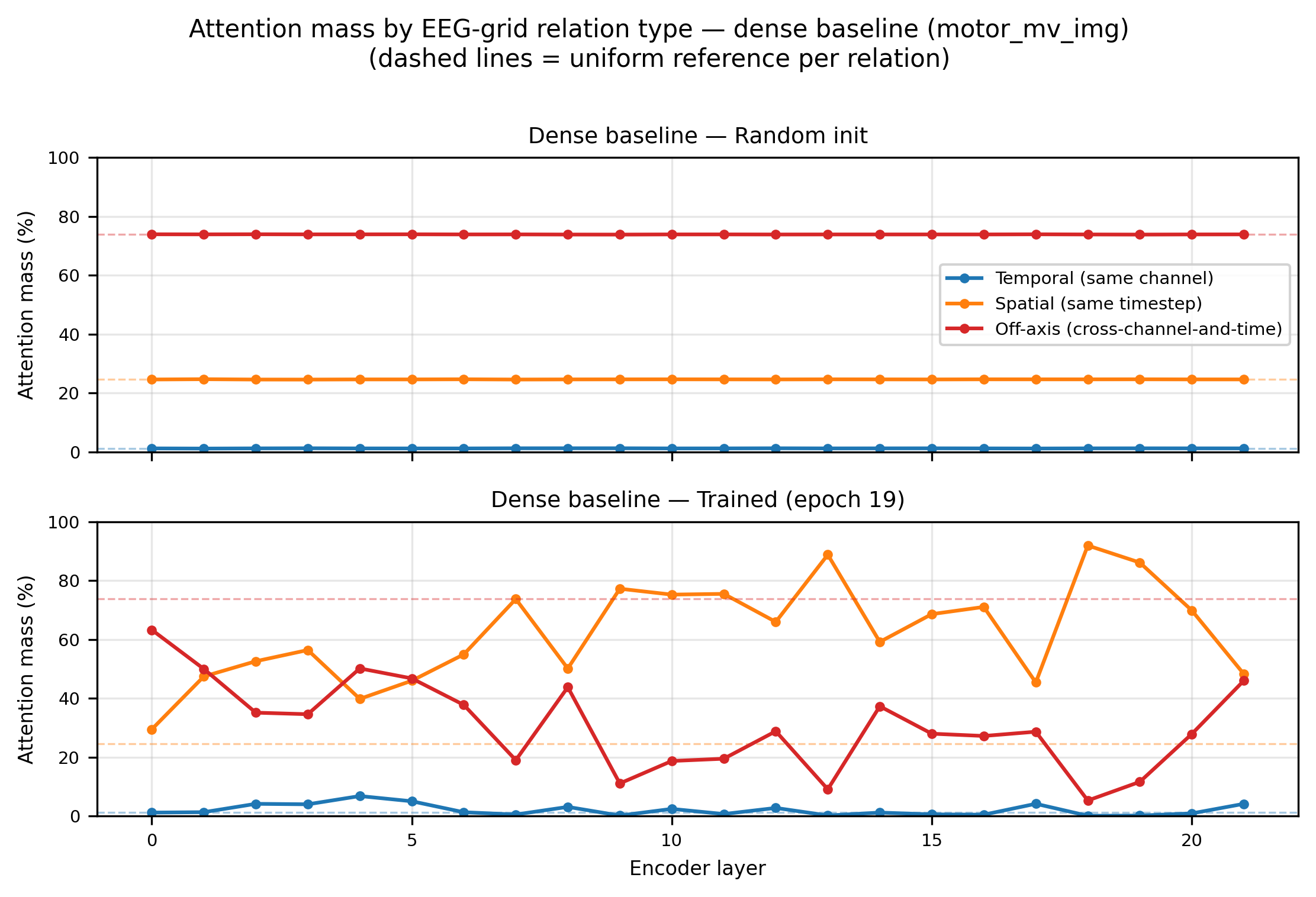}
\caption{Relation-type attention mass per layer in the dense baseline ($C\!=\!64, T\!=\!4$). \textbf{Top}: Random initialisation matches the uniform complete-graph prior (dashed lines). \textbf{Bottom}: After MAE pretraining, dense attention discovers spatial dominance in mid-depth layers but fails to fully suppress off-axis interactions. Early and late layers retain up to 63\% off-axis mass, leaving off-axis interactions that AXON removes by construction.}
\label{fig:relation_mass}
\end{figure}

\subsection{Does AXON rely on the electrodes that physiology predicts?}
\label{app:covering}
The task is four-class motor imagery (motor\_mv\_img, Table~\ref{tab:datasets}). In each trial the subject imagines moving the left hand, the right hand, both fists, or the feet. The body is controlled from the opposite side of the brain: the left hand from the right hemisphere, the right hand from the left. Both fists use both sides. The feet are controlled from the midline. So for each class we know which electrodes a good classifier should be using.

We take the trained AXON motor-imagery classifier and its held-out test subjects. We cover the seven electrodes over the left motor cortex, so the model gets no signal from them, and measure how much each class's recall changes. Recall is the fraction of a class's trials that the model labels correctly. Then we do the same for the seven electrodes over the right motor cortex. We chose the electrodes and the measure before looking at any result.

If the classifier uses the correct electrodes, covering one side should mainly hurt the opposite hand, and should not hurt both fists or feet in a one-sided way. If it used all electrodes alike, covering either side would hurt all four classes alike. Table~\ref{tab:covering} shows the first pattern. Covering the left side drops right-hand recall by 0.101 and does not hurt the left hand ($+$0.058). Covering the right side drops left-hand recall by 0.127 and does not hurt the right hand ($-$0.004). Both fists and feet show no one-sided change. So the drop is not just ``fewer electrodes, worse accuracy'': each hand's decision depends on the electrodes over the opposite hemisphere, exactly where physiology says it should. We use this test rather than an attention map because it changes the input and watches the decision; an attention map only shows where the weights point.

\begin{table}[h]
\centering\small
\caption{Change in recall after covering the left or right motor strip (motor imagery, held-out subjects); negative means worse.}
\label{tab:covering}
\begin{tabular}{lrr}
\toprule
Imagined movement & Cover LEFT strip & Cover RIGHT strip \\
\midrule
Left hand  & $+$0.058 & $\mathbf{-0.127}$ \\
Right hand & $\mathbf{-0.101}$ & $-$0.004 \\
Both fists & $-$0.027 & $-$0.001 \\
Feet       & $+$0.013 & $+$0.052 \\
\bottomrule
\end{tabular}
\end{table}

\subsection{Does the accuracy depend on particular electrodes?}
\label{app:deletion}
Deleting randomly chosen electrodes at test time, AXON stays ahead of dense (balanced accuracy) at every level, from 0.377 vs.\ 0.321 intact to 0.274 vs.\ 0.263 with 94\% of electrodes removed. Here the classification head is trained on mean-pooled frozen features rather than through our full evaluation harness, so absolutes differ from Table~\ref{tab:main_results} and only the model-to-model comparison is meaningful. This matters clinically, where reduced montages and failed electrodes are routine.

\section{Cross-Modal Generalization (AudioMAE)}
\label{app:audio}

\paragraph{Gap to published AudioMAE.}
The absolute mAP values are not directly comparable to those reported by \citet{huang2022audiomae}. For example, our best full AudioSet-2M FT result is 14.29 mAP, whereas published AudioMAE-style results report much higher absolute mAP under a substantially different training recipe (e.g., 37.0 on AudioSet-20K). This gap reflects five controlled differences: (1)~spectrogram resolution (128 vs.\ 8 frequency bins, a 16$\times$ reduction that brings the frequency axis to a size comparable to the EEG electrode axis); (2)~pretraining compute (4 epochs vs.\ 32 epochs, $\sim$9$\times$ fewer sample-views); (3)~model capacity ($d\!=\!512$ vs.\ $d\!=\!768$, $\sim$50\% fewer parameters); (4)~decoder depth (4 vs.\ 16 layers); and (5)~FT augmentation (no Mixup, SpecAugment, or DropPath). Crucially, the factorized and dense variants share all five of these constraints, so the within-setup comparison is valid.

\subsection{Small-scale preliminary results (18K AudioSet clips)}

Before scaling to 200K clips, we verified factorized attention at 1\% of AudioSet ($\sim$18K clips). The same four-variant design (Dense, Factorized fixed gate, Factorized token gate, Token+Global) was trained for 33 epochs with identical optimizer and schedule.

\begin{table}[h]
\centering
\caption{Audio results at 1\% scale (18K AudioSet clips)}
\label{tab:audio_18k}
\resizebox{\linewidth}{!}{
\begin{tabular}{lcccc}
\toprule
Model & AudioSet FT mAP & AudioSet LP mAP & ESC-50 LP Acc & SC LP Acc \\
\midrule
Dense & $6.76{\pm}0.07$ & $1.18{\pm}0.01$ & $20.50{\pm}0.43$ & $11.23{\pm}0.76$ \\
Factorized (fixed gate) & $10.36{\pm}0.12$ & $1.21{\pm}0.01$ & $21.33{\pm}2.75$ & $11.70{\pm}0.46$ \\
Factorized (token gate) & $\mathbf{10.48}{\pm}0.33$ & $1.32{\pm}0.01$ & $20.67{\pm}0.63$ & $12.26{\pm}0.03$ \\
Token + Global & $10.09{\pm}0.14$ & $\mathbf{1.40}{\pm}0.01$ & $\mathbf{21.50}{\pm}0.20$ & $\mathbf{12.64}{\pm}0.10$ \\
\bottomrule
\end{tabular}
}
\end{table}

All factorized variants improved AudioSet FT mAP by $+$49--55\% relative over the dense baseline even at this small scale, demonstrating that the factorized advantage is present from the smallest dataset scale tested.

\subsection{200K-scale results}

\begin{table}[h]
\centering
\caption{Audio results at 200K scale ($\sim$10\% of AudioSet-2M)}
\label{tab:audio_200k}
\resizebox{\linewidth}{!}{
\begin{tabular}{lcccc}
\toprule
Model & AudioSet FT mAP & AudioSet LP mAP & ESC-50 LP Acc & SC LP Acc \\
\midrule
Dense & $11.04{\pm}0.25$ & $3.61{\pm}0.00$ & $40.33{\pm}0.51$ & $17.71{\pm}0.12$ \\
Factorized (fixed gate) & $13.52{\pm}0.15$ & $3.76{\pm}0.03$ & $43.42{\pm}0.94$ & $19.88{\pm}0.08$ \\
Factorized (token gate) & $13.75{\pm}0.20$ & $4.09{\pm}0.04$ & $44.42{\pm}0.42$ & $\mathbf{21.08}{\pm}0.10$ \\
Token + Global & $\mathbf{13.99}{\pm}0.15$ & $\mathbf{4.40}{\pm}0.04$ & $\mathbf{44.75}{\pm}1.27$ & $20.00{\pm}0.18$ \\
\bottomrule
\end{tabular}
}
\end{table}

At 200K, the factorized advantage persists across all four metrics. Token+Global wins 3 of 4 metrics but underperforms the token-gate model on SpeechCommands LP (20.00 vs.\ 21.08). The SpeechCommands ordering varies across scale: Token+Global is ahead at 18K (Table~\ref{tab:audio_18k}: 12.64 vs.\ 12.26), behind at 200K, and approximately tied with the token-gate model at 2M (Table~\ref{tab:audio}: 21.87 vs.\ 21.96). We therefore avoid drawing a stable architectural conclusion from this single metric and focus on the consistent factorized-vs-dense improvement.

\section{Cross-Modal Gate Mechanism Analysis}
\label{app:audio_gate}

To test whether the gate intervention findings from EEG (\S\ref{app:gate_interventions}) are EEG-specific or reflect a general property of axis-factorized attention, we run the same seven gate overrides on the audio AXON-TokenGated model (200K AudioSet pretraining). The encoder is frozen; only the linear probe head is trained. We evaluate on ESC-50 and SpeechCommands~v2.

\begin{table}[h]
\centering
\caption{Audio gate intervention results. The same seven overrides from the EEG analysis are applied to the frozen audio AXON-TokenGated encoder. $\Delta$ is the absolute change in accuracy relative to the learned baseline.}
\label{tab:audio_gate}
\small
\begin{tabular}{lcccc}
\toprule
Intervention & ESC-50 Acc & SC Acc & ESC-50 $\Delta$ & SC $\Delta$ \\
\midrule
Learned (baseline) & 44.50 & 20.93 & --- & --- \\
\midrule
Shuffled & 44.50 & 20.98 & $+$0.00 & $+$0.05 \\
Layer-mean & 44.25 & 20.71 & $-$0.25 & $-$0.22 \\
Uniform $(1/2, 1/2)$ & 44.00 & 21.01 & $-$0.50 & $+$0.08 \\
Position-mean & 43.50 & 20.59 & $-$1.00 & $-$0.34 \\
Hard argmax & 39.00 & 20.39 & $-$5.50 & $-$0.54 \\
Frequency-only ($\alpha\!=\!0$) & 29.75 & 18.03 & $-$14.75 & $-$2.90 \\
Temporal-only ($\beta\!=\!0$) & 23.75 & 13.64 & $-$20.75 & $-$7.29 \\
\bottomrule
\end{tabular}
\end{table}

\begin{figure}[h]
\centering
\includegraphics[width=0.75\linewidth]{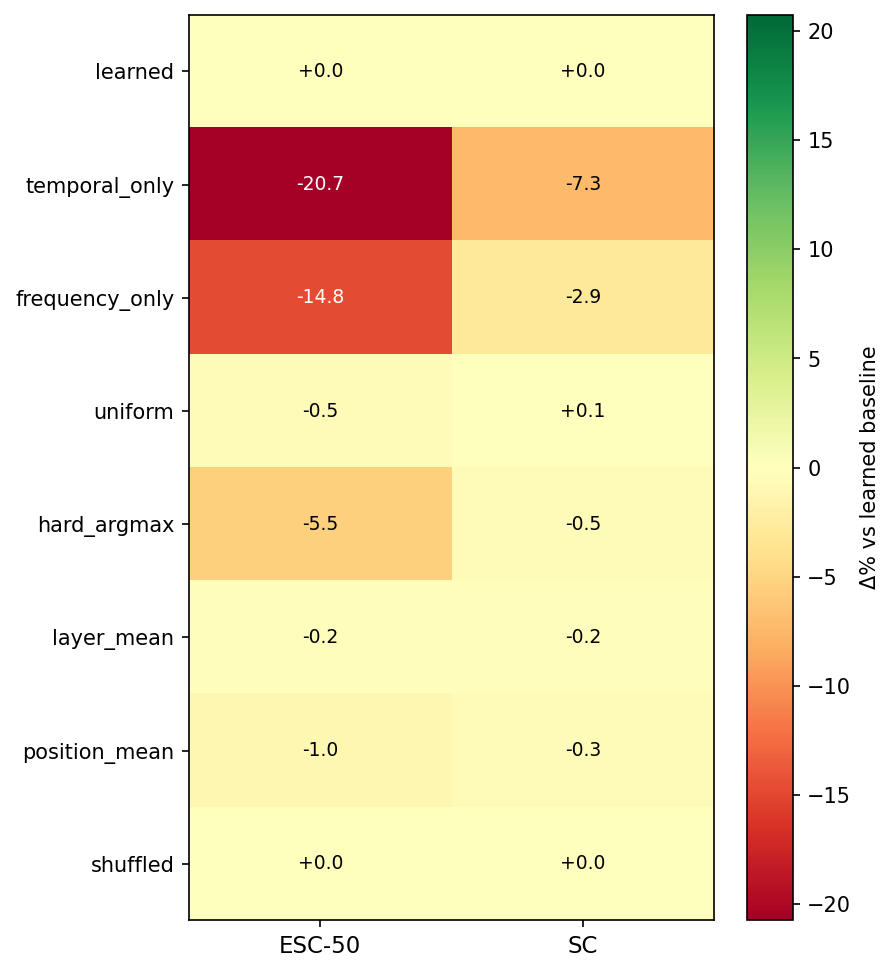}
\caption{Audio gate intervention heatmap ($\Delta$\% vs.\ learned baseline). The pattern mirrors EEG: single-axis routing is catastrophic, while shuffled and layer-mean gates are indistinguishable from the learned gate.}
\label{fig:audio_gate_heatmap}
\end{figure}

\paragraph{Cross-modal comparison.}
Table~\ref{tab:audio_vs_eeg_gate} compares the mean relative degradation of each intervention across EEG (6~tasks, balanced accuracy) and audio (2~tasks, accuracy). Both columns report the mean relative change $(\Delta / \text{baseline}) \times 100$. The interventions show a similar broad pattern across domainss: removing either axis or replacing soft mixing with hard selection is more damaging than averaging or shuffling the gates. The exact ordering and magnitudes differ: spatial-only is the worst override on EEG but temporal-only is the worst on audio, and uniform mixing costs 5.6\% on EEG but almost nothing on audio.

\begin{table}[h]
\centering
\caption{Gate intervention mean relative degradation (\%): EEG vs.\ audio. Both columns report the mean relative change from the learned baseline. The broad pattern is shared across domains; the exact ordering and magnitudes differ.}
\label{tab:audio_vs_eeg_gate}
\small
\begin{tabular}{lcc}
\toprule
Intervention & EEG Mean $\Delta$\% & Audio Mean $\Delta$\% \\
\midrule
Shuffled & $-$1.3 & $+$0.1 \\
Layer-mean & $-$2.1 & $-$0.8 \\
Position-mean & $-$0.8 & $-$1.9 \\
Uniform & $-$5.6 & $-$0.4 \\
Hard argmax & $-$12.2 & $-$7.5 \\
Spatial/Freq-only & $-$22.0 & $-$23.5 \\
Temporal-only & $-$17.5 & $-$40.7 \\
\bottomrule
\end{tabular}
\end{table}

\paragraph{Audio layer-mean gate profile.}
The calibrated per-layer gate means reveal an interpretable depth schedule. Early layers favour the frequency axis ($\alpha_1\!=\!0.422$, frequency-heavy), mid layers are approximately balanced ($\alpha_{3\text{--}5}\approx 0.485$), and late layers shift toward the temporal axis ($\alpha_{10}\!=\!0.564$, $\alpha_{11}\!=\!0.556$). This is the opposite direction from EEG, where early layers are temporal-heavy ($\alpha_0\!=\!0.745$) and mid layers are spatial-heavy. The reversal is consistent with domain structure: early audio layers capture spectral features (pitch, harmonics) that require cross-frequency integration, while late layers capture temporal dynamics (onsets, rhythm) that require cross-time integration. In EEG, the early temporal bias captures fast transient features (spikes, ERD onset) before spatial mixing integrates across electrodes.

\paragraph{Uniform robustness gap.}
The most notable cross-modal difference is the uniform intervention: $-$5.6\% in EEG but only $-$0.4\% in audio. This indicates that the audio schedule is flatter, the per-layer gate values range from $\alpha\!=\!0.422$ to $0.564$ (range~$0.14$), compared to $\alpha\!=\!0.294$ to $0.745$ (range~$0.45$) in EEG. Audio representations benefit nearly equally from both axes at all depths, while EEG requires stronger layer-varying axis preferences. This is consistent with audio spectrograms containing genuine cross-axis harmonic structure at all levels, whereas EEG temporal and spatial dynamics are more separable.

\paragraph{Summary.}
The gate interventions show the same broad pattern in both domains: both axes and soft mixing are necessary, while the dominant useful gate structure is the learned per-layer temporal/spatial balance. Token-level content routing is secondary rather than dominant in both EEG ($-$1.3\% shuffled) and audio ($+$0.1\% shuffled). The token gate serves as a training mechanism that discovers an appropriate layer-wise axis schedule, and the optimal schedule direction differs between domains (temporal-first in EEG, frequency-first in audio).

\subsection{Limitations}
\label{app:limitations}

Several limitations should be noted.
First, the theoretical support for removing the global path rests on empirical ablations (Table~\ref{tab:ablation}) rather than a formal information-theoretic proof for nonlinear masked autoencoders; such a treatment remains open.
Second, the audio experiment uses a reduced spectrogram resolution (8 frequency bins vs.\ AudioMAE's 128), limiting the absolute performance achievable; while the within-setup comparison is valid, the factorized advantage under full spectral resolution has not been verified.
Third, the pretraining corpus pools clinical EEG recordings from a limited number of sources; performance on substantially different populations or recording protocols has not been evaluated.

\subsection{Future Work}

Priority directions include: (1) repeating the AudioMAE experiment at full spectrogram resolution (128 frequency bins) to determine whether the factorized advantage persists when spectral detail is not bottlenecked; (2) investigating whether a task-conditioned gate temperature could resolve the temporal locality tradeoff across clinical and BCI tasks simultaneously; and (3) evaluating AXON on additional downstream domains such as sleep staging with polysomnography and intracranial EEG.

\end{document}